\def\eqref#1{equation~\ref{#1}}
\def\1{\bm{1}}
\def\eps{{\varepsilon}}
\DeclareMathAlphabet{\mathsfit}{\encodingdefault}{\sfdefault}{m}{sl}
\SetMathAlphabet{\mathsfit}{bold}{\encodingdefault}{\sfdefault}{bx}{n}
\newcommand{\R}{\mathbb{R}}
\DeclareMathOperator*{\argmax}{arg\,max}
\DeclareMathOperator*{\argmin}{arg\,min}
\newcommand{\review}[1]{{\color{black}#1}}
\newcommand{\eqreff}[1]{\hyperref[#1]{(\ref{#1})}}
\DeclareMathOperator{\Tr}{tr}
\newcommand{\Trace}[1]{\Tr \left( #1 \right)}
\newcommand{\norm}[1]{\left\|#1\right\|}
\newcommand{\Pnom}{\hat \PP} 
\newcommand{\be}{\begin{equation}}
\newcommand{\ee}{\end{equation}}
\newcommand{\Max}{\max\limits_}
\newcommand{\Min}{\min\limits_}
\newcommand{\Sup}{\sup\limits_}
\newcommand{\Inf}{\inf\limits_}
\newcommand{\mbb}{\mathbb}
\newcommand{\Wass}{\mathds W}
\newcommand{\UU}{\mathds U}
\newcommand{\opt}{^\star}
\newcommand{\inner}[2]{\big \langle #1, #2 \big \rangle }
\newcommand{\ie}{\textit{i.e., }}
\newcommand{\PSD}{\mathbb{S}_+}
\newcommand{\wh}{\hat}
\newcommand{\mc}{\mathcal}
\newcommand{\EE}{\mathds{E}}
\newcommand{\QQ}{\mathbb{Q}}
\newcommand{\PP}{\mathbb{P}}
\newcommand{\dualvar}{\gamma}
\newcommand{\Let}{\triangleq}
\newcommand{\Proj}{\mathrm{Proj}}
\newcommand{\st}{\mathrm{s.t.}}
\newcommand{\m}{\mu}
\newcommand{\cov}{\Sigma}
\newcommand{\msa}{\wh \m}
\newcommand{\covsa}{\wh \cov}
\newcommand{\half}{\frac{1}{2}}
\title{Distributionally Robust Fair \\ Principal Components via Geodesic Descents}
\author{Hieu Vu, Toan Tran \\
VinAI Research, Vietnam \\
%\texttt{\{v.hieuvt11, v.toantm3\}@vinai.io} \\
\And
Man-Chung Yue \\
Hong Kong Polytechnic University
%\texttt{\{robot,net\}@wits.ac.za} \\
\And
Viet Anh Nguyen \\
VinAI Research, Vietnam \\
%\texttt{v.anhnv81@vinai.io}
}
\begin{document}

\maketitle

\begin{abstract}
    Principal component analysis is a simple yet useful dimensionality reduction technique in modern machine learning pipelines. In consequential domains such as college admission, healthcare and credit approval, it is imperative to take into account emerging criteria such as the fairness and the robustness of the learned projection. In this paper, we propose a distributionally robust optimization problem for principal component analysis which internalizes a fairness criterion in the objective function. The learned projection thus balances the trade-off between the total reconstruction error and the reconstruction error gap between subgroups, taken in the min-max sense over all distributions in a moment-based ambiguity set. The resulting optimization problem over the Stiefel manifold can be efficiently solved by a Riemannian subgradient descent algorithm with a sub-linear convergence rate. Our experimental results on real-world datasets show the merits of our proposed method over state-of-the-art baselines. 
\end{abstract}

\section{Introduction}
\label{sec:intro}

Machine learning models are ubiquitous in our daily lives and supporting the decision-making process in diverse domains. With their flourishing applications, there also surface numerous concerns regarding the fairness of the models' outputs~\citep{mehrabi2021survey}. Indeed, these models are prone to biases due to various reasons~\citep{barocas2018fairness}. First, the collected training data is likely to include some demographic disparities due to the bias in the data acquisition process (e.g., conducting surveys on a specific region instead of uniformly distributed places), or the imbalance of observed events at a specific period of time. Second, because machine learning methods only care about data statistics and are objective driven, groups that are under-represented in the data can be neglected in exchange for a better objective value. Finally, even human feedback to the predictive models can also be biased, e.g., click counts are human feedback to recommendation systems but they are highly correlated with the menu list suggested previously by a potentially biased system. 
Real-world examples of machine learning models that amplify biases and hence potentially cause unfairness are commonplace, ranging from recidivism prediction giving higher false positive rates for African-American\footnote{https://www.propublica.org/article/machine-bias-risk-assessments-in-criminal-sentencing} to facial recognition systems having large error rate for women\footnote{https://news.mit.edu/2018/study-finds-gender-skin-type-bias-artificial-intelligence-systems-0212}. 

 To tackle the issue, various fairness criteria for supervised learning have been proposed in the literature, which encourage the (conditional) independence of the model's predictions on a particular sensitive attribute~\citep{dwork2012fairness, hardt2016equality, kusner2017counterfactual, chouldechova2017fair, verma2018fairness, berk2021fairness}. Strategies to mitigate algorithmic bias are also investigated for all stages of the machine learning pipelines \citep{berk2021fairness}. For the pre-processing steps, \citep{kamiran2012data} proposed reweighting or resampling techniques to achieve statistical parity between subgroups; in the training steps, fairness can be encouraged by adding constraints~\citep{ref:donini2018empirical} or regularizing the original objective function \citep{kamishima2012fairness, zemel2013learning}; and in the post-processing steps, adjusting classification threshold by examining black-box models over a holdout dataset can be used \citep{hardt2016equality, wei2019optimized}. 

Since biases may already exist in the raw data, it is reasonable to demand machine learning pipelines to combat biases as early as possible. We focus in this paper on the Principal Component Analysis (PCA), which is a fundamental dimensionality reduction technique in the early stage of the pipelines~\citep{pearson1901liii, hotelling1933analysis}. PCA finds a linear transformation that embeds the original data into a lower-dimensional subspace that maximizes the variance of the projected data. Thus, PCA may amplify biases if the data variability is different between the majority and the minority subgroups, see an example in Figure~\ref{fig:compare}. A naive approach to promote fairness is to train one independent transformation for each subgroup. However, this requires knowing the sensitive attribute of each sample, which would raise disparity concerns. 
On the contrary, using a single transformation for all subgroups is ``group-blinded" and faces no discrimination problem~\citep{ref:lipton2018does}. 

%there are several drawbacks of this approach that are not encountered in the case of using only one model. First, maintaining one model for a subgroup 

Learning a fair PCA has attracted attention from many fields from machine learning, statistics to signal processing. \citet{ref:samadi2018price} and~\citet{zalcberg2021fair} propose to find the principal components that minimize the maximum subgroup reconstruction error; the min-max formulations can be relaxed and solved as semidefinite programs. \citet{ref:olfat2019convex} propose to learn a transformation that minimizes the possibility of predicting the sensitive attribute from the projected data. Apart from being a dimensionality reduction technique, PCA can also be thought of as a representation learning toolkit. Viewed in this way, we can also consider a more general family of fair representation learning methods that can be applied before any further analysis steps. There are a number of works develop towards this idea \citep{kamiran2012data, zemel2013learning, calmon2017optimized, feldman2015certifying, beutel2017data, madras2018learning, zhang2018mitigating, tantipongpipat2019multi}, which apply a multitude of fairness criteria. 

In addition, we also focus on the robustness criteria for the linear transformation. Recently, it has been observed that machine learning models are susceptible to small perturbations of the data~\citep{goodfellow2014explaining,madry2017towards, carlini2017towards}. These observations have fuelled many defenses using adversarial training~\citep{akhtar2018threat,chakraborty2018adversarial} and distributionally robust optimization~\citep{rahimian2019distributionally, ref:kuhn2019wasserstein, ref:blanchet2021statistical}.

\textbf{Contributions.} This paper blends the ideas from the field of fairness in artifical intelligence and distributionally robust optimization. Our contributions can be described as follows.
\begin{itemize}[leftmargin = 5mm]
    \item We propose the fair principal components which balance between the total reconstruction error and the absolute gap of reconstruction error between subgroups. Moreover, we also add a layer of robustness to the principal components by considering a min-max formulation that hedges against all perturbations of the empirical distribution in a moment-based ambiguity set.
    \item We provide the reformulation of the distributionally robust fair PCA problem as a finite-dimensional optimization problem over the Stiefel manifold. We provide a Riemannian gradient descent algorithm and show that it has a sub-linear convergence rate.
\end{itemize}

\begin{figure}[ht]
\centering
\begin{minipage}{0.48\textwidth}
Figure~\ref{fig:compare} illustrates the qualitative comparison between (fair) PCA methods and our proposed method on a 2-dimensional toy example. The majority group (blue dots) spreads on the horizontal axis, while the minority group (yellow triangles) spreads on the slanted vertical axis. The nominal PCA (red) captures the majority direction to minimize the total error, while the fair PCA of~\citet{ref:samadi2018price} returns the diagonal direction to minimize the maximum subgroup error. Our fair PCA can probe the full spectrum in between these two extremes by sweeping through our penalization parameters appropriately. If we do not penalize the error gap between subgroups, we recover the PCA method; if we penalize heavily, we recover the fair PCA of~\citet{ref:samadi2018price}. Extensive numerical results on real datasets are provided in Section~\ref{sec:experiment}. Proofs are relegated to the appendix.

\end{minipage}\hfill
\begin{minipage}{0.5\textwidth}
    \centering
    \includegraphics[width=0.8\textwidth]{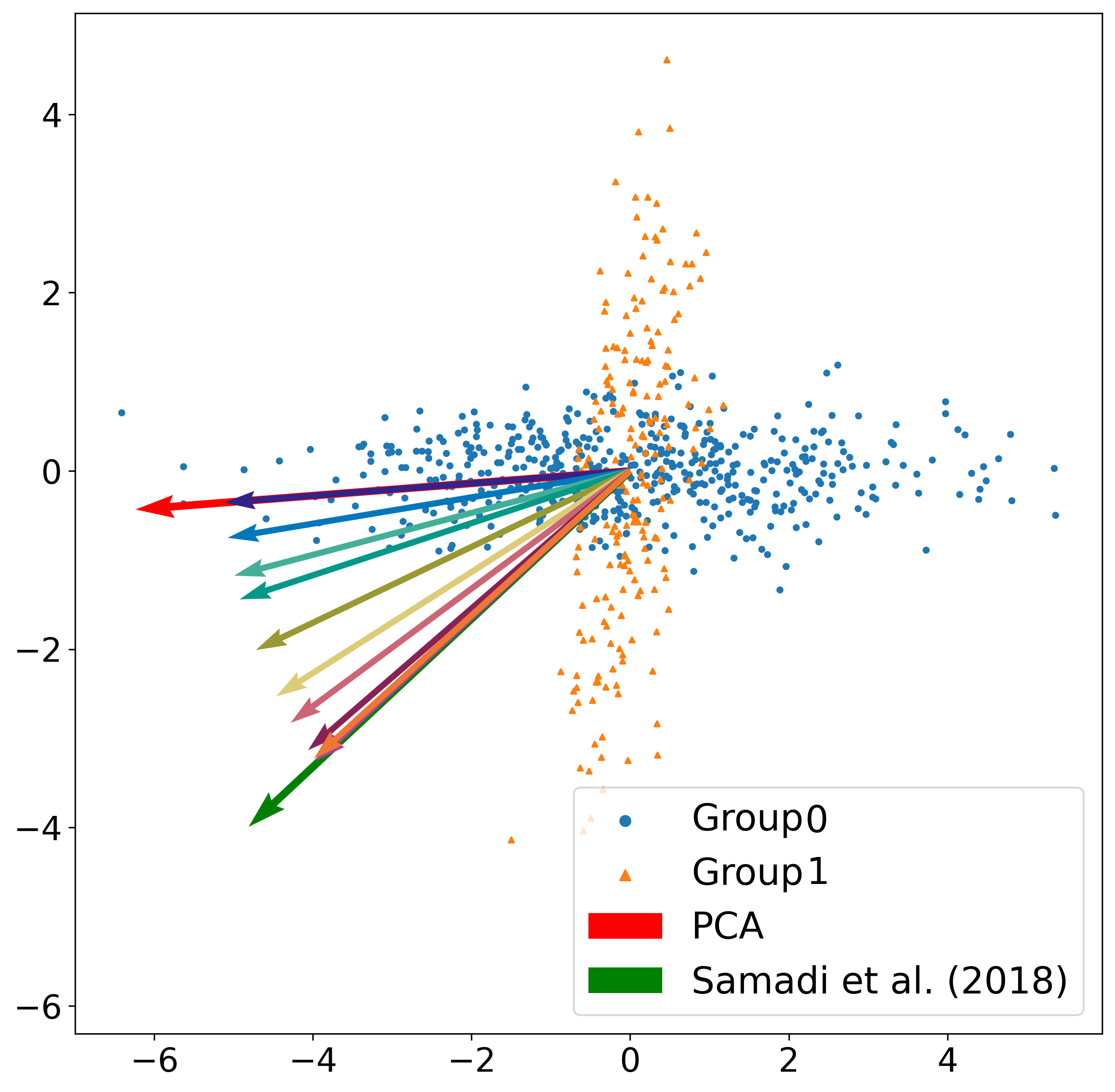}
    \caption{Nominal PCA (red arrow), fair PCA by \citet{ref:samadi2018price} (green arrow), and our spectrum of fair PCA (shorter arrows). Arrows show directions and are not normalized to unit length.}
    \label{fig:compare}
\end{minipage}
\end{figure}

% \textbf{Notations.} 

% \newpage
\section{Fair Principal Component Analysis} \label{sec:fairPCA}

\subsection{Principal Component Analysis}
We first briefly revisit the classical PCA. Suppose that we are given a collection of $N$ i.i.d. samples $\{\wh x_i\}_{i=1}^N$ generated by some underlying distribution $\PP$. For simplicity, we assume that both the empirical and population mean are zero vectors. The goal of PCA is to find a  $k$-dimensional linear subspace of $\R^d$ that explains as much variance contained in the data $\{\wh x_i\}_{i=1}^N$ as possible, where $k<d$ is a given integer.
More precisely, we parametrize $k$-dimensional linear subspaces by orthonormal matrices, \ie matrices whose columns are orthogonal and have unit Euclidean norm. Given any such matrix $V$, the associated $k$-dimensional subspace is the one spanned by the columns of $V$. The projection matrix onto the subspace is $VV^\top$, and hence the variance of the projected data is given by $\Trace{VV^\top \Xi \Xi^\top}$, where $\Xi = [\wh x_1, \cdots, \wh x_N] \in \R^{d\times N}$ is the data matrix. By a slight abuse of terminology, sometimes we refer to $V$ as the projection matrix. The problem of PCA then reads
\be\label{eq:pca_0}
 \Max{V \in \R^{d \times k}, V^\top V = I_k} \Trace{VV^\top \Xi \Xi^\top}.  
\ee 
For any vector $X\in \R^d$ and orthonormal matrix $V$, denote by $\ell (V, X)$ the reconstruction error, \ie
\[ \ell(V, X) = \| X - VV^\top  X\|_2^2 = X^\top (I_d - VV^\top)X. \]
The problem of PCA can alternatively be formulated as a stochastic optimization problem
\be \label{eq:pca}
        \Min{V \in \R^{d \times k}, V^\top V = I_k}~ \EE_{ \Pnom}[ \ell(V, X) ],
\ee
where $\Pnom$ is the empirical distribution associated with the samples $\{\wh x_i\}_{i=1}^N$ and $X\sim \Pnom$. It is well-known that PCA admits an analytical solution. In particular, the optimal solution to problem~\eqreff{eq:pca} (and also problem~\eqreff{eq:pca_0}) is given by any orthonormal matrix whose columns are the eigenvectors associated with the $k$ largest eigenvalues of the sample covariance matrix $\Xi \Xi^\top$.

%%%%%%%%%%%%%%%%%%%%%%
\subsection{Fair Principal Component Analysis} \label{sec:fair-pca}

In the fair PCA setting, we are also given a discrete sensitive attribute $A \in \mc A$, where $A$ may represent features such as race, gender or education. We consider binary attribute $A$ and let $\mc A = \{0,1\}$. A straightforward idea to define fairness is to require the (strict) balance of a certain objective between the two groups. For example, this is the strategy in~\citet{ref:hardt2016equality} for developing fair supervised learning algorithms. A natural objective to balance in the PCA context is the reconstruction error. 

\begin{definition}[Fair projection] \label{def:fair-PCA}
    Let $\QQ$ be an arbitrary distribution of $(X, A)$.
    A projection matrix $V \in \R^{d \times k}$ is fair relative to $\QQ$ if the conditional expected reconstruction error is equal between subgroups, \ie
    $
        \EE_{\QQ}[\ell(V, X) | A = a] = \EE_{\QQ}[\ell(V, X) | A = a'] $ for any $ (a, a') \in \mc A \times \mc A$.
\end{definition}

Unfortunately, Definition~\ref{def:fair-PCA} is too stringent: for a general probability distribution $\QQ$, it is possible that there exists no fair projection matrix $V$. 
\begin{proposition}[Impossibility result] \label{prop:impossibility} 
For any distribution $\QQ$ on $\mc X\times \mc A$, there exists a fair projection matrix $V\in \R^{d \times k}$ relative to $\QQ$ if and only if $\mathrm{rank}( \EE_{\QQ} [ XX^\top | A = 0 ] - \EE_{\QQ}[XX^\top | A = 1]) \le k$.
\end{proposition}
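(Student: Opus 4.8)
The plan is to reduce the existence of a fair projection to a one-dimensional spectral feasibility question, and then to read off the rank condition. Set $M_a \defeq \EE_{\QQ}[XX^\top \mid A=a]$ for $a\in\{0,1\}$ and $D \defeq M_0 - M_1$, a symmetric $d\times d$ matrix. Since $\ell(V,X)=X^\top(I_d-VV^\top)X$, taking conditional expectations and using cyclicity of the trace gives $\EE_{\QQ}[\ell(V,X)\mid A=a]=\Trace{(I_d-VV^\top)M_a}=\Tr(M_a)-\Trace{V^\top M_a V}$. Subtracting the identities for $a=0$ and $a=1$, the projection $V$ is fair relative to $\QQ$ if and only if $\Trace{V^\top D V}=\Tr(D)$, i.e. $\Trace{(I_d-VV^\top)D}=0$. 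Hence a fair projection exists if and only if $\Tr(D)$ belongs to the set $\mathcal{R}_k(D)\defeq\{\Trace{V^\top D V}:V\in\R^{d\times k},\ V^\top V=I_k\}$.

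Next I would determine $\mathcal{R}_k(D)$ exactly. Let $\lambda_1\ge\cdots\ge\lambda_d$ be the eigenvalues of $D$. By Ky Fan's maximum principle, $\Max{V^\top V=I_k}\Trace{V^\top D V}=\sum_{i=1}^k\lambda_i$, and applying the same principle to $-D$ yields $\Min{V^\top V=I_k}\Trace{V^\top D V}=\sum_{i=1}^k\lambda_{d-i+1}$. Because $k<d$, the Stiefel manifold $\{V:V^\top V=I_k\}$ is connected and $V\mapsto\Trace{V^\top D V}$ is continuous, so $\mathcal{R}_k(D)$ is precisely the closed interval between these two extreme values. Writing $\Tr(D)=\sum_{i=1}^d\lambda_i$, we conclude that a fair projection exists if and only if $\sum_{i=1}^{d-k}\lambda_i\ge 0$ and $\sum_{i=k+1}^{d}\lambda_i\le 0$.

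It then remains to prove that this pair of inequalities is equivalent to $\mathrm{rank}(D)\le k$. One direction is immediate: if $D$ has at most $k$ nonzero eigenvalues, then all of its positive eigenvalues lie among $\lambda_1,\dots,\lambda_k$ (so $\lambda_{k+1},\dots,\lambda_d\le 0$) and all of its negative eigenvalues lie among $\lambda_{d-k+1},\dots,\lambda_d$ (so $\lambda_1,\dots,\lambda_{d-k}\ge 0$), and both inequalities follow. The reverse implication is the step I expect to be the main obstacle, and I would attack it by contraposition: assuming $D$ has at least $k+1$ nonzero eigenvalues, split into cases by the inertia $(p,n)$ of $D$ (respectively the number of strictly positive and strictly negative eigenvalues, so $p+n\ge k+1$) and show that in each case at least one of the tail sums $\sum_{i=k+1}^d\lambda_i$ or $\sum_{i=1}^{d-k}\lambda_i$ is forced strictly to the wrong side of zero, placing $\Tr(D)$ outside $\mathcal{R}_k(D)$. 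The careful accounting of which eigenvalues occupy the top-$k$, bottom-$k$, and middle blocks is where the real content lies; I also anticipate having to invoke the positive semidefiniteness of $M_0$ and $M_1$ --- rather than mere symmetry of $D$ --- to control the relative sizes of the $\lambda_i$. The reduction and the Ky Fan--connectedness steps, by contrast, are essentially routine.
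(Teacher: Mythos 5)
Your reduction is sound and in fact sharper than the paper's own argument: fairness of $V$ is equivalent to $\Trace{(I_d-VV^\top)D}=0$ with $D=M_0-M_1$, and the Ky Fan extremal principle combined with connectedness of the Stiefel manifold shows that the attainable values of $\Trace{V^\top DV}$ form the closed interval between the sum of the $k$ smallest and the sum of the $k$ largest eigenvalues of $D$, so a fair projection exists if and only if $\sum_{i=1}^{d-k}\lambda_i\ge 0$ and $\sum_{i=k+1}^{d}\lambda_i\le 0$. Up to this point everything is correct, and your ``if'' half (rank at most $k$ implies both inequalities) matches the paper's construction of $U$ from eigenvectors of zero eigenvalues.

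The step you defer, however --- showing that the two tail-sum inequalities force $\mathrm{rank}(D)\le k$ --- is a genuine gap that cannot be closed, because the implication is false, and neither an inertia case analysis nor positive semidefiniteness of $M_0,M_1$ can rescue it. Take $d=2$, $k=1$, $M_0=e_1e_1^\top$, $M_1=e_2e_2^\top$ (realizable, e.g., $X\mid A=0$ uniform on $\{\pm e_1\}$ and $X\mid A=1$ uniform on $\{\pm e_2\}$). Then $D=\mathrm{diag}(1,-1)$ satisfies your inequalities ($\lambda_1=1\ge 0\ge\lambda_2=-1$), and indeed $V=\tfrac{1}{\sqrt 2}(1,-1)^\top$ is fair since $\Trace{(I_2-VV^\top)D}=0$, yet $\mathrm{rank}(D)=2>k$. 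So your (correct) exact characterization actually refutes the ``only if'' direction of the proposition as stated. This is consistent with the fact that the paper's proof of that direction rests on an invalid inference: from $\inner{UU^\top}{S}=\Tr(U^\top SU)=0$ it concludes that the null space of $S$ has dimension at least $d-k$, which holds only when $S$ is sign-definite on the range of $U$ (e.g., PSD), not for an indefinite difference of second-moment matrices. In short, your plan is correct up to the interval characterization, but the target equivalence is unprovable because false; the right feasibility condition is exactly your pair of eigenvalue inequalities, i.e., that $0$ lies between the sum of the $d-k$ smallest and the sum of the $d-k$ largest eigenvalues of $D$, with the rank condition being sufficient but not necessary.
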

One way to circumvent the impossibility result is to relax the requirement of strict balance to approximate balance. In other words, an inequality constraint of the following form is imposed:
\[
|\EE_{\QQ}[\ell(V, X) | A = a] - \EE_{\QQ}[\ell(V, X) | A = a']| \le \epsilon \qquad \forall (a, a') \in \mc A \times \mc A,
\]
where $\epsilon >0$ is some prescribed fairness threshold. This approach has been adopted in other fair machine learning settings, see~\citet{ref:donini2018empirical} and~\citet{ref:agarwal2019fair} for example.

In this paper, instead of imposing the fairness requirement as a constraint, we penalize the unfairness in the objective function. Specifically, for any projection matrix $V$, we define the unfairness as the absolute difference between the conditional loss between two subgroups:
\[
    \UU(V, \QQ) \Let | \EE_{\QQ}[\ell(V, X) | A = 0] - \EE_{\QQ}[\ell(V, X) | A = 1] | .
\]
We thus consider the following fairness-aware PCA problem
\be \label{eq:fair-aware}
    \Min{V \in \R^{d \times k},~V^\top V = I_k} \EE_{\Pnom}[ \ell(V, X)] + \lambda \UU(V, \Pnom),
\ee
where $\lambda \ge 0$ is a penalty parameter to encourage fairness. Note that for fair PCA, the dataset is $\{(\wh x_i, \wh a_i)\}_{i=1}^N$ and hence the empirical distribution $\Pnom$ is given by $\Pnom = \frac{1}{N} \sum_{i=1}^N  \delta_{(\wh x_i, \wh a_i )} $.

\section{Distributionally Robust Fair PCA}
\label{sec:Wass}
The weakness of empirical distribution-based stochastic optimization has been well-documented, see~\citep{smith2006optimizer,homem2014monte}. In particular, due to overfitting, the out-of-sample performance of the decision, prediction, or estimation obtained from such a stochastic optimization model is unsatisfactory, especially in the low sample size regime. Ideally, we could improve the performance by using the underlying distribution $\PP$ instead of the empirical distribution $\Pnom$. But the underlying distribution $\PP$ is unavailable in most practical situations, if not all. Distributional robustification is an emerging approach to handle this issue and has been shown to deliver promising out-of-sample performance in many applications~\citep{ref:delage2010distributionally, ref:namkoong2017variance, ref:kuhn2019wasserstein,rahimian2019distributionally}. Motivated by the success of distributional robustification, especially in machine learning~\cite{ref:nguyen2019calculating, ref:taskesen2021sequential}, we propose a robustified version of model~\eqreff{eq:fair-aware}, called the distributionally robust fairness-aware PCA:
\be \label{eq:fair-dro}
    \Min{V \in \R^{d \times k}, V^\top V = I_k} \Sup{\QQ \in \mbb B(\Pnom)}~ \EE_{\QQ}[ \ell(V, X)] + \lambda \UU(V, \QQ),
\ee
where $\mbb B(\Pnom)$ is a set of probability distributions similar to the empirical distribution $\Pnom$ in a certain sense, called the ambiguity set. The empirical distribution $\Pnom$ is also called the nominal distribution. Many different ambiguity sets have been developed and studied in the optimization literature, see~\citet{rahimian2019distributionally} for an extensive overview. 

\subsection{The Wasserstein-type Ambiguity Set}
To present our ambiguity set and main results, we need to introduce some definitions and notations. 
% Formally, we construct $\mbb B(\Pnom)$ as a neighborhood around the nominal distribution $\Pnom$. More specifically, $\mbb B(\Pnom)$ is prescribed using a Wasserstein-type divergence on the mean vector-covariance matrix space.
\begin{definition}[Wasserstein-type divergence] \label{def:Wasserstein}
	The divergence $\mathds W$ between two probability distributions $\QQ_1 \sim (\m_1, \cov_1) \in \R^d \times \PSD^d$ and $\QQ_2  \sim (\m_2, \cov_2) \in \R^d \times \PSD^d$ is defined as 
	\begin{equation*}
	\mathds W \big( \QQ_1  \parallel \QQ_2 \big) \Let  \| \m_1 - \m_2 \|_2^2 + \Trace{\cov_1 + \cov_2 - 2 \big( \cov_2^{\half} \cov_1 \cov_2^{\half} \big)^{\half} }.
	\end{equation*}
\end{definition}
The divergence $\mathds W$ coincides with the \textit{squared} type-$2$ Wasserstein distance between two Gaussian distributions $\mc N(\m_1, \cov_1)$ and $\mc N(\m_2, \cov_2)$~\citep{ref:givens1984class}. 
One can readily show that $\mathds W$ is non-negative, and it vanishes if and only if $(\m_1, \cov_1) = (\m_2, \cov_2)$, which implies that $\QQ_1$ and $\QQ_2$ have the same first- and second-moments. Recently, distributional robustification with Wasserstein-type ambiguity sets has been applied widely to various problems
including domain adaption~\citep{ref:taskesen2021sequential},
risk measurement~\citep{ref:nguyen2021mean} and statistical estimation~\citep{ref:nguyen2021bridging}. The Wasserstein-type divergence in Definition~\ref{def:Wasserstein} is also related to the theory of optimal transport with its applications in robust decision making~\citep{ref:esfahani2018data, ref:blanchet2019quantifying, ref:yue2020linear} and potential applications in fair machine learning~\citep{ref:taskesen2020distributionally, ref:si2021testing, ref:wang2021wasserstein}.

Recall that the nominal distribution is $\Pnom = \frac{1}{N} \sum_{i=1}^N  \delta_{(\wh x_i, \wh a_i )}$. For any $a\in \mc A$, its conditional distribution given $A = a$ is given by
\[ \Pnom_a = \frac{1}{| \mc I_a |} \sum_{i \in \mc I_a} \delta_{x_i}, \quad \text{where} \quad \mc I_a \Let \{i \in \{1, \ldots, N\} : a_i = a  \} .\]
We also use $(\msa_a, \covsa_a)$ to denote the empirical mean vector and covariance matrix of $X$ given $A = a$:
\[
    \msa_a = \EE_{\Pnom_a}[X] = \EE_{\Pnom}[X | A = a]\quad \text{and} \quad \covsa_a + \msa_a \msa_a^\top = \EE_{\Pnom_a}[XX^\top] = \EE_{\Pnom}[XX^\top | A = a].
\]
For any $a\in\mc A$, the empirical marginal distribution of $A$ is denoted by $\wh p_a = | \mc I_a | / N$. 

Finally, for any set $\mc S$, we use $\mc P(\mc S)$ to denote the set of all probability distributions supported on $\mc S$. 
For any integer $k$, the $k$-by-$k$ identity matrix is denoted $I_k$. We then define our ambiguity set as
\be \label{eq:B-def}
    \mbb B(\Pnom) \Let \left\{ \QQ \in \mc P(\mc X \times \mc A): \begin{array}{l}
    \exists \QQ_{a} \in \mc P(\mc X) \text{ such that:} \\
    %\QQ(\mathrm{d} x \times \mathrm{d} a) = \sum_{a \in \mc A} \wh p_{a} \QQ_{a}(\mathrm{d} x) \delta_{a} (\mathrm{d}a) \\
    \review{\QQ(\mathbb X \times \{a\} ) = \wh p_a \QQ_a(\mathbb X) \quad \forall \mathbb X \subseteq \R^d,~a \in \mc A }\\
    \mathds W(\QQ_{a}, \Pnom_{a}) \le \eps_{a} \quad \forall a \in \mc A
    \end{array}
    \right\},
\ee
where $\QQ_a$ is the conditional distribution of $X | A = a$. Intuitively, each $\QQ \in \mbb B(\Pnom)$ is a joint distribution of the random vector $(X, A)$, formed by taking a mixture of conditional distributions $\QQ_a$ with mixture weight $\wh p_a$. Each conditional distribution $\QQ_a$ is constrained in an $\eps_a$-neighborhood of the nominal conditional distribution $\Pnom_a$ with respect to the $\Wass$ divergence. Because the loss function $\ell$ is a quadratic function of $X$, the (conditional) expected losses only involve the first two moments of $X$, and thus prescribing the ambiguity set using $\Wass$ would suffice for the purpose of robustification.

\subsection{Reformulation}

We now present the reformulation of problem~\eqreff{eq:fair-dro} under the ambiguity set $\mbb B(\Pnom)$.
\begin{theorem}[Reformulation] \label{thm:Wass-refor}
   Suppose that for any $a\in \mc A$, either of the following two conditions holds:
   \begin{enumerate}[label=(\roman*)]
    \item Marginal probability bounds: $0\le \lambda \le  \wh p_a$, 
    \item Eigenvalue bounds: the empirical second moment matrix $\wh M_a = \frac{1}{N_a} \sum_{i \in \mc I_a} \wh x_i \wh x_i^\top $ satisfies $\sum_{j = 1}^{d-k} \sigma_j(\wh M_a) \ge \eps_a$, where $\sigma_j(\wh M_a)$ is the $j$-th smallest eigenvalues of $\wh M_a$.
\end{enumerate}
Then problem~\eqreff{eq:fair-dro} is equivalent to
\begin{subequations} \label{eq:fair}
\be \label{eq:fair1}
    \Min{V \in \R^{d \times k}, V^\top V = I_k}~ \max\{ J_0(V), J_1(V)\},
\ee
where for each $(a, a') \in \{(0,1), (1, 0)\}$, the function $J_a$ is defined as
\be \label{eq:fair2}
    J_a(V) = \kappa_a + \theta_a \sqrt{\inner{I_d - VV^\top}{\wh M_a}} + \vartheta_{a'} \sqrt{\inner{I_d - VV^\top}{\wh M_{a'}}} + \inner{I_d - VV^\top}{C_a},
\ee
and the parameters $\kappa \in \R$, $\theta \in \R$, $\vartheta \in \R$ and $C \in \PSD^d$ are defined as
\be \label{eq:fair3}
    \begin{array}{rl}
    \kappa_a &= (\wh p_a + \lambda) \eps_a + (\wh p_{a'} - \lambda) \eps_{a'}, \quad \theta_a = 2 |\wh p_a + \lambda| \sqrt{\eps_a}, \quad \vartheta_{a'} = 2 |\wh p_{a'} - \lambda | \sqrt{\eps_{a'}}, \\
    C_a &= ( \wh p_a + \lambda ) \wh M_a + (\wh p_{a'} - \lambda) \wh M_{a'}.
    \end{array}
\ee
\end{subequations}
\end{theorem}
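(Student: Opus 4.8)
The plan is to evaluate the inner supremum in~\eqreff{eq:fair-dro} explicitly, one subgroup at a time, and then reassemble the pieces. Since every $\QQ \in \mbb B(\Pnom)$ is the $\wh p_a$-mixture of conditional distributions $\QQ_a$, and the objective $\EE_{\QQ}[\ell(V,X)] + \lambda\,\UU(V,\QQ)$ depends on $\QQ$ only through the two conditional expected losses $L_a(\QQ_a) \Let \EE_{\QQ_a}[\ell(V,X)]$, the supremum decouples across the constraints $\Wass(\QQ_a,\Pnom_a)\le\eps_a$. Using $\UU(V,\QQ) = |L_0 - L_1|$, I would first split on the sign of $L_0 - L_1$, writing $\max\{L_0 - L_1, L_1 - L_0\}$, so that on each branch the objective becomes a linear combination $(\wh p_0 + \lambda)L_0 + (\wh p_1 - \lambda)L_1$ (resp.\ with $0\leftrightarrow1$) of the two conditional losses. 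This is exactly the source of the two functions $J_0, J_1$ and of the coefficient pairs $(\wh p_a + \lambda, \wh p_{a'} - \lambda)$ appearing in~\eqreff{eq:fair3}. Taking the sup over $\QQ$ then reduces to taking, for each branch, the sup over $\QQ_a$ of $c_a L_a(\QQ_a)$ with $c_a$ the corresponding scalar coefficient, subject to $\Wass(\QQ_a,\Pnom_a)\le\eps_a$.

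The core computation is therefore the worst-case conditional reconstruction error
\[
    \sup\nolimits_{\Wass(\QQ_a,\Pnom_a)\le\eps_a}\ \EE_{\QQ_a}\big[X^\top(I_d - VV^\top)X\big].
\]
Because $\ell(V,X) = \inner{I_d - VV^\top}{XX^\top}$ is affine in the second moment matrix $\Sigma_a + \m_a\m_a^\top$ of $\QQ_a$, and because $\Wass$ depends on $\QQ_a$ only through $(\m_a,\Sigma_a)$, this is a finite-dimensional problem over mean–covariance pairs: maximize $\inner{I_d - VV^\top}{\Sigma + \m\m^\top}$ subject to $\|\m - \msa_a\|_2^2 + \Trace{\Sigma + \covsa_a - 2(\covsa_a^{1/2}\Sigma\covsa_a^{1/2})^{1/2}} \le \eps_a$. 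I would solve this via the known Bures–Wasserstein / Gelbrich-type bound: the optimal $\m$ aligns the mean perturbation with the range of $I_d - VV^\top$, and the optimal $\Sigma$ is obtained in closed form, yielding a value of the form $\inner{I_d-VV^\top}{\wh M_a} + \eps_a + 2\sqrt{\eps_a\,\inner{I_d - VV^\top}{\wh M_a}}$, i.e.\ $\big(\sqrt{\inner{I_d - VV^\top}{\wh M_a}} + \sqrt{\eps_a}\big)^2$, after absorbing $\msa_a\msa_a^\top$ into $\wh M_a$. Combining the branch coefficient $c_a$ with this square-root expression produces precisely $\kappa_a$, $\theta_a$, $\vartheta_{a'}$, and the linear term $\inner{I_d - VV^\top}{C_a}$ in~\eqreff{eq:fair2}–\eqreff{eq:fair3}, and the outer $\max$ over the two sign branches becomes $\max\{J_0(V), J_1(V)\}$.

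The main obstacle, and the only place the hypotheses (i)–(ii) enter, is the sign of the coefficient $\wh p_{a'} - \lambda$ multiplying the \emph{other} subgroup's loss: when $\lambda \le \wh p_{a'}$ this coefficient is nonnegative, so the decoupled subproblem is a genuine maximization whose value is the clean square-root expression above; but when $\lambda > \wh p_{a'}$ the coefficient is negative, turning that piece into a \emph{minimization} of $\EE_{\QQ_{a'}}[\ell(V,X)]$ over the Wasserstein ball, whose optimal value is $\big(\max\{0,\ \sqrt{\inner{I_d - VV^\top}{\wh M_{a'}}} - \sqrt{\eps_{a'}}\}\big)^2$ and only matches the stated formula when the inner $\max$ is not active. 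The eigenvalue condition (ii), $\sum_{j=1}^{d-k}\sigma_j(\wh M_{a'}) \ge \eps_{a'}$, is exactly what guarantees that for every feasible $V$ on the Stiefel manifold we have $\inner{I_d - VV^\top}{\wh M_{a'}} \ge \eps_{a'}$ (the left side is minimized by aligning the column space of $V$ with the top eigenvectors of $\wh M_{a'}$, leaving the $d-k$ smallest eigenvalues), so the truncation is inactive and the square-root form is valid. I would handle the two conditions as two cases, note that under (i) the sign is automatically right and under (ii) the truncation is vacuous, use the absolute values $|\wh p_a + \lambda|$, $|\wh p_{a'} - \lambda|$ in~\eqreff{eq:fair3} to cover both signs uniformly, and finally re-express all second moments through $\wh M_a = \covsa_a + \msa_a\msa_a^\top$ to arrive at~\eqreff{eq:fair}.
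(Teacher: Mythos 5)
Your proposal is correct and follows essentially the same route as the paper: expand the absolute value $\UU(V,\QQ)$ into the two branches giving $J_0$ and $J_1$, decouple the supremum across the subgroup constraints $\Wass(\QQ_a,\Pnom_a)\le\eps_a$, evaluate each worst-case conditional expectation in closed form (this is exactly the paper's Proposition~\ref{prop:Wass-refor}), and invoke condition (i) to guarantee a nonnegative coefficient or condition (ii) to guarantee $\inner{I_d-VV^\top}{\wh M_{a'}}\ge\eps_{a'}$ so that the truncated branch of the negative-coefficient case never activates, with the absolute values $|\wh p_a+\lambda|$, $|\wh p_{a'}-\lambda|$ unifying both signs. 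The only difference is cosmetic: the paper derives the inner closed form by convex duality (Schur complement and an explicit dual optimizer $\gamma^\star$), whereas you sketch the equivalent primal Gelbrich/Bures--Wasserstein moment argument, which yields the same expression.
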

% \MCY{
% We still need either of the two conditions to hold. But that is very mild. So, I think we are good.
% }

We now briefly explain the steps that lead to the results in Theorem~\ref{thm:Wass-refor}.
Letting 
\begin{align*}
    J_0(V ) &= \Sup{\QQ \in \mbb B(\Pnom)}~(\wh p_0 + \lambda) \EE_{\QQ}[\ell(V, X) | A = 0] + (\wh p_1 - \lambda) \EE_{\QQ}[\ell(V, X) | A= 1], \\
    J_1(V) &= \Sup{\QQ \in \mbb B(\Pnom)}~(\wh p_0 - \lambda) \EE_{\QQ}[\ell(V, X) | A = 0] + (\wh p_1 + \lambda) \EE_{\QQ}[\ell(V, X) | A= 1] ,
\end{align*}
then by expanding the term $\UU(V, \QQ)$ using its definition, problem~\eqreff{eq:fair-dro} becomes
\[
    \Min{V \in \R^{d \times k}, V^\top V = I_k} \max\{J_0(V), J_1(V)\}.
\]
Leveraging the definition the ambiguity set $\mbb B(\Pnom)$, for any pair $(a, a') \in \{(0, 1), (1, 0)\}$, we can decompose $J_a$ into two separate supremum problems as follows
\[
J_a(V) = \Sup{\QQ_a: \Wass(\QQ_{a}, \Pnom_{a}) \le \eps_{a}}~(\wh p_a + \lambda) \EE_{\QQ_a}[\ell(V, X) ] + \Sup{\QQ_{a'}: \Wass(\QQ_{a'}, \Pnom_{a'}) \le \eps_{a'}} (\wh p_{a'} - \lambda) \EE_{\QQ_{a'}}[\ell(V, X) ].
\]

The next proposition asserts that each individual supremum in the above expression admits an analytical expression.
\begin{proposition}[Reformulation] \label{prop:Wass-refor}
Fix $a \in \mc A$. For any $\upsilon \in \R$, $\eps_a \in \R_+$, it holds that
\begin{align*}
    &\Sup{\QQ_a: \Wass( \QQ_a, \Pnom_a) \le \eps_a}~\upsilon \EE_{\QQ_a}[\ell(V, X) ] \\
    &= 
    \begin{cases}
    \upsilon\left(  \sqrt{\inner{ I_d - VV^\top}{\wh M_a}} + \sqrt{\eps_a}\right)^2 & \text{if } \upsilon \ge 0, \\
    \upsilon\left(  \sqrt{\inner{ I_d - VV^\top}{\wh M_a}} - \sqrt{\eps_a}\right)^2 & \text{if } \upsilon < 0 \text{ and } \inner{ I_d - VV^\top}{\wh M_a} \ge \eps_a, \\
    0 & \text{if } \upsilon < 0 \text{ and } \inner{ I_d - VV^\top}{\wh M_a} < \eps_a.
\end{cases}
\end{align*}
\end{proposition}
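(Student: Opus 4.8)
The plan is to eliminate the measure $\QQ_a$ in favour of its first two moments, reduce to Euclidean geometry, and match a reverse--triangle inequality against a few explicit extremal distributions. Write $P:=I_d-VV^\top$, an orthogonal projection of rank $d-k$, so that $\ell(V,X)=\inner{P}{XX^\top}$ and $\EE_{\QQ_a}[\ell(V,X)]=\inner{P}{\cov+\m\m^\top}$, where $(\m,\cov)$ is the mean--covariance pair of $\QQ_a$; this, and the constraint $\Wass(\QQ_a,\Pnom_a)\le\eps_a$, depend on $\QQ_a$ only through $(\m,\cov)$. Since $\mc N(\m,\cov)$ realizes any $(\m,\cov)\in\R^d\times\PSD^d$, the supremum in question equals
\[
  \Sup{(\m,\cov)\in\R^d\times\PSD^d}~\upsilon\,\inner{P}{\cov+\m\m^\top}\quad\text{s.t.}\quad\|\m-\msa_a\|_2^2+\Trace{\cov+\covsa_a-2\big(\covsa_a^{\half}\cov\covsa_a^{\half}\big)^{\half}}\le\eps_a .
\]
Abbreviate $m_a:=\inner{P}{\wh M_a}=\EE_{\Pnom_a}[\ell(V,X)]\ge 0$.

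\textbf{Step 2 (the key inequality).} I would show that the functional $\QQ\mapsto\sqrt{\EE_\QQ[\ell(V,X)]}$ is $1$-Lipschitz with respect to $\sqrt{\Wass(\cdot,\cdot)}$, which forces $\big|\sqrt{\EE_{\QQ_a}[\ell(V,X)]}-\sqrt{m_a}\big|\le\sqrt{\eps_a}$ for every feasible $\QQ_a$. Since both sides depend only on moments, replace $\QQ_a$ and $\Pnom_a$ by Gaussians with the same moments; as $\Wass$ is the squared type-$2$ Wasserstein distance between Gaussians, fixing an optimal coupling $(Z_1,Z_2)$ gives $\EE\|Z_1-Z_2\|_2^2=\Wass(\QQ_a,\Pnom_a)$, while $\EE\|PZ_1\|_2^2=\EE_{\QQ_a}[\ell(V,X)]$ and $\EE\|PZ_2\|_2^2=m_a$; the reverse triangle inequality for the $L^2$-norm and the $1$-Lipschitzness of $x\mapsto Px$ then yield the bound. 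Combined with $\EE_{\QQ_a}[\ell(V,X)]\ge0$, this produces $(\sqrt{m_a}-\sqrt{\eps_a})_+^2\le\EE_{\QQ_a}[\ell(V,X)]\le(\sqrt{m_a}+\sqrt{\eps_a})^2$ over the whole ambiguity set.

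\textbf{Step 3 (matching extremal distributions).} For $\upsilon\ge0$ and $m_a>0$, let $\QQ_a^\star$ be the pushforward of $\Pnom_a$ under the linear map $T=I_d+\sqrt{\eps_a/m_a}\,P$; then $\Wass(\QQ_a^\star,\Pnom_a)\le\EE_{\Pnom_a}\|(T-I_d)X\|_2^2=(\eps_a/m_a)\,m_a=\eps_a$, and since $PT=(1+\sqrt{\eps_a/m_a})\,P$ one computes $\EE_{\QQ_a^\star}[\ell(V,X)]=(1+\sqrt{\eps_a/m_a})^2m_a=(\sqrt{m_a}+\sqrt{\eps_a})^2$; when $m_a=0$ one instead perturbs $\covsa_a$ to $\covsa_a+\eps_a uu^\top$ for a unit $u\in\mathrm{range}(P)$, which is feasible and attains $\eps_a$. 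For $\upsilon<0$ and $m_a\ge\eps_a$, the same construction with $T=I_d-\sqrt{\eps_a/m_a}\,P\succeq0$ attains $(\sqrt{m_a}-\sqrt{\eps_a})^2$. For $\upsilon<0$ and $m_a<\eps_a$, the pushforward of $\Pnom_a$ under $VV^\top=I_d-P$ has $\EE[\ell(V,X)]=0$ and $\Wass(\cdot,\Pnom_a)\le\EE_{\Pnom_a}\|PX\|_2^2=m_a<\eps_a$, hence is feasible.

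\textbf{Step 4 (assembly; anticipated obstacle).} For $\upsilon\ge0$ the supremum equals $\upsilon$ times the maximum of $\EE_{\QQ_a}[\ell(V,X)]$, i.e.\ $\upsilon(\sqrt{m_a}+\sqrt{\eps_a})^2$; for $\upsilon<0$ it equals $\upsilon$ times the minimum, i.e.\ $\upsilon(\sqrt{m_a}-\sqrt{\eps_a})^2$ when $m_a\ge\eps_a$ and $0$ when $m_a<\eps_a$. Substituting $m_a=\inner{I_d-VV^\top}{\wh M_a}$ gives the stated three-case formula, and the boundary cases ($\upsilon=0$, and $m_a=\eps_a$) are consistent across branches. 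I expect Step 2 to be the crux: one must justify passing to Gaussians with matched moments for both the objective and the divergence, and the fact that the $\Wass$-divergence of a pushforward is bounded by the obvious transport cost (because $\Wass$ underestimates the true squared Wasserstein distance, which is itself at most any coupling cost). A self-contained alternative avoids Gaussians by splitting $\eps_a$ between the mean ball $\{\|\m-\msa_a\|_2^2\le t\}$ and the Bures--Wasserstein ball $\{\Trace{\cov+\covsa_a-2(\covsa_a^{\half}\cov\covsa_a^{\half})^{\half}}\le\eps_a-t\}$ and optimizing each; the covariance piece then requires the lemma that the extreme values of $\inner{P}{\cov}$ over that ball are $(\sqrt{\inner{P}{\covsa_a}}+\sqrt s)^2$ and $(\sqrt{\inner{P}{\covsa_a}}-\sqrt s)_+^2$, which in turn follows by contracting the Bures--Wasserstein distance through the projection $x\mapsto Px$.
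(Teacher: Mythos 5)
Your argument is correct, but it proves the proposition by a genuinely different route than the paper. The paper also reduces to the moment problem over $(\m_a,\cov_a)$, but then invokes a semidefinite duality result (Lemma~3.22 of Nguyen et al.) and grinds through the dual: Woodbury inversion, a Schur-complement elimination of $\tau$, reduction to the scalar dual $\inf_{\dualvar>\max\{0,\upsilon\}} \dualvar\eps_a + \tfrac{\dualvar\upsilon}{\dualvar-\upsilon}\inner{I_d-VV^\top}{\wh M_a}$, and an explicit computation of $\dualvar\opt$ in the three regimes. You instead stay primal: the coupling/reverse-triangle argument shows $\QQ\mapsto\sqrt{\EE_{\QQ}[\ell(V,X)]}$ is $1$-Lipschitz in $\sqrt{\Wass}$, pinning $\EE_{\QQ_a}[\ell(V,X)]$ between $(\sqrt{m_a}-\sqrt{\eps_a})_+^2$ and $(\sqrt{m_a}+\sqrt{\eps_a})^2$, and the pushforwards under $I_d\pm\sqrt{\eps_a/m_a}\,(I_d-VV^\top)$ attain the bounds. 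What you gain is an elementary, certificate-style proof that also exhibits the worst-case distributions explicitly (useful for interpretation and for tightness); what the paper's dual route buys is mechanical generality—it needs no inspired extremal construction and extends to objectives where one is not available. Two small points to tighten: your feasibility check for the pushforwards leans on the Gelbrich inequality ($\Wass$ lower-bounds the true squared $W_2$), which is standard but external; you can avoid it entirely by computing the moments of the pushforward directly—since $T$ is symmetric and $T\succeq 0$ in both cases used, $\Trace{\big(\covsa_a^{1/2}T\covsa_a T\covsa_a^{1/2}\big)^{1/2}}=\Trace{T\covsa_a}$, so $\Wass(\QQ_a^\star,\Pnom_a)=\Trace{(T-I_d)^2\wh M_a}=\eps_a$ exactly. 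Also, in Step 2 you should state explicitly that the Lipschitz bound proved for the matched-moment Gaussians transfers verbatim because both $\EE_{\QQ_a}[\ell(V,X)]$ and $\Wass(\cdot,\Pnom_a)$ are functions of the first two moments only; as written this is implicit.
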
 

The proof of Theorem~\ref{thm:Wass-refor} now follows by applying Proposition~\ref{prop:Wass-refor} to each term in $J_a$, and balance the parameters to obtain~\eqreff{eq:fair3}. A detailed proof is relegated to the appendix. In the next section, we study an efficient algorithm to solve problem~\eqreff{eq:fair1}.

\begin{remark}[Recovery of the nominal PCA]
    If $\lambda = 0$ and $\eps_a = 0~\forall a \in \mc A$, our formulation~\eqreff{eq:fair-dro} becomes the standard PCA problem~\eqreff{eq:pca}. In this case, our robust fair principal components reduce to the standard principal components. On the contrary, existing fair PCA methods such as~\citet{ref:samadi2018price} and~\citet{ref:olfat2019convex} cannot recover the standard principal components.
\end{remark}

\section{Riemannian Gradient Descent Algorithm} \label{sec:riemann}

\review{The distributionally robust fairness-aware PCA problem~\eqreff{eq:fair-dro} is originally an infinite-dimensional min-max problem. Indeed, the inner maximization problem in~\eqreff{eq:fair-dro} optimizes over the space of probability measures. Thanks to Theorem~\ref{thm:Wass-refor}, it is reduced to the simpler finite-dimensional minimax problem~\eqreff{eq:fair1}, where the inner problem is only a maximization over two points.} Problem~\eqreff{eq:fair1} is, however, still challenging as it is a non-convex optimization problem over a non-convex feasible region defined by the orthogonality constraint $V^\top V = I_d$. The purpose of this section is to devise an efficient algorithm for solving problem~\eqreff{eq:fair1} to local optimality based on Riemannian optimization.

\subsection{Reparametrization}
As mentioned above, the non-convexity of problem~\eqreff{eq:fair1} comes from both the objective function and the feasible region. It turns out that we can get rid of the non-convexity of the objective function via a simple change of variables.
To see that, we let $U\in \R^{d\times (d-k)}$ be an orthonormal matrix complement to $V$, that is, $U$ and $V$ satisfy $UU^\top + VV^\top = I_d$. Thus, we can express the objective function $J$ via
\begin{equation*}
    J(V) = F(U) \Let \max\{F_0(U), F_1(U)\},
\end{equation*}
where for $(a, a') \in \{(0, 1), (1, 0)\}$, the function $F_a$ is defined as
\begin{equation*}
    F_a (U) \Let \kappa_a + \theta_a \sqrt{\inner{UU^\top}{\wh M_a}} + \vartheta_{a'} \sqrt{\inner{UU^\top}{\wh M_{a'}}} + \inner{UU^\top}{C_a}.
\end{equation*}
Moreover, letting $\mathcal{M} \Let \{ U \in \R^{d \times (d-k)}: U^\top U = I_{d-k}\}$, we can re-express problem~\eqreff{eq:fair1} as
\begin{equation}\label{opt:F(U)}
        \Min{U \in \mathcal{M}}~F(U).
\end{equation}
The set $\mc M$ of problem~\eqreff{opt:F(U)} is a Riemannian manifold, called the Stiefel manifold~\cite[Section 3.3.2]{ref:absil2007optimization}. It is then natural to solve~\eqreff{opt:F(U)} using a Riemannian optimization algorithms~\citep{ref:absil2007optimization}. In fact, problem~\eqreff{eq:fair1} itself (before the change of variables) can also be cast as a Riemannian optimization problem over another Stiefel manifold. The change of variables above might seem unnecessary. Nonetheless, the upshot of problem~\eqreff{opt:F(U)} is that the objective function $F$ is convex (in the traditional sense). This faciliates the application of the theoretical and algorithmic framework developed in~\citet{li2019weakly} for (weakly) convex optimization over the Stiefel manifolds.

\subsection{The Riemannian Subgradient}
Note that the objective function $F$ is non-smooth since it is defined as the maximum of two functions $F_0$ and $F_1$. To apply the framework in~\citet{li2019weakly}, we need to compute the Riemannian subgradient of the objective function~$F$. Since the Stiefel manifold $\mc M$ is an embedded manifold in Euclidean space, the Riemannian subgradient of $F$ at any point $U\in \mc M$ is given by the orthogonal projection of the usual Euclidean subgradient onto the tangent space of the manifold $\mc M$ at the point $U$, see~\citet[Section 3.6.1]{ref:absil2007optimization} for example. 
\begin{lemma}\label{lem:Riemannian_subgradient}
    For any point $U\in \mc M$, let\footnote{It is possible that the maximizer is not unique. In that case, choosing $a_U$ to be either 0 or 1 would work.} $a_U \in \arg\max_{a \in \{0, 1\}} F_a(U)$ and $a_U' = 1 - a_U$. Then, a Riemannian subgradient of the objective function $F$ at the point $U$ is given by
    \begin{align*}
    \text{grad} F(U) 
    &= (I_d - UU^\top) \left( \frac{\theta_{a_U}}{\sqrt{\inner{UU^\top}{\wh M_{a_U}}}} \wh M_{a_U} U + \frac{\vartheta_{a_U'}}{\sqrt{\inner{UU^\top}{\wh M_{a_U'}}}} \wh M_{a_U'}U + 2 C_{a_U} U \right). \label{eq:R_grad_F}
    %&= (I_d - UU^\top) \left( \frac{\alpha_{i_U}}{\sqrt{\inner{UU^\top}{A}}} AU + \frac{\beta_{i_U}}{\sqrt{\inner{UU^\top}{B}}} BU + 2 C_{i_U} U \right). \label{eq:R_grad_F}
\end{align*}
\end{lemma}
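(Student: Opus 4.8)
The plan is to derive the formula from two standard ingredients: the characterization of the Riemannian (sub)gradient on an embedded submanifold, and an explicit Euclidean (sub)gradient computation, after which a short algebraic observation makes the tangent-space projection collapse to multiplication by $I_d - UU^\top$. First I would recall that $\mathcal{M}=\{U\in\R^{d\times(d-k)}:U^\top U = I_{d-k}\}$ is an embedded Riemannian submanifold of $\R^{d\times(d-k)}$ endowed with the Frobenius inner product. Hence, by the standard rule for embedded submanifolds \citep[Section 3.6.1]{ref:absil2007optimization} — combined with the usual ``project a Euclidean subgradient'' argument for locally Lipschitz functions, which is applicable here because $F=\max\{F_0,F_1\}$ is a pointwise maximum of smooth functions — a Riemannian subgradient of $F$ at $U$ is obtained by taking any Euclidean subgradient $\xi\in\partial F(U)$ and projecting it orthogonally onto the tangent space $T_U\mathcal{M}$.

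Second I would produce a concrete Euclidean subgradient. Since $\partial F(U)$ is the convex hull of $\{\nabla F_a(U): F_a(U)=F(U)\}$ and $a_U$ indexes an active piece, $\nabla F_{a_U}(U)\in\partial F(U)$. To compute $\nabla F_a$, I would write $\inner{UU^\top}{M}=\Tr(U^\top M U)$, use $\nabla_U\Tr(U^\top M U)=2MU$ for symmetric $M$, and apply the chain rule to the square roots, obtaining
\[
\nabla F_a(U) = \frac{\theta_a}{\sqrt{\inner{UU^\top}{\wh M_a}}}\,\wh M_a U + \frac{\vartheta_{a'}}{\sqrt{\inner{UU^\top}{\wh M_{a'}}}}\,\wh M_{a'} U + 2 C_a U,
\]
which is exactly the bracketed expression in the lemma statement (for $a=a_U$, $a'=a_U'$).

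Third I would recall the explicit orthogonal projection onto the Stiefel tangent space: $T_U\mathcal{M}=\{Z:U^\top Z + Z^\top U = 0\}$ has orthogonal complement $\{US: S=S^\top\}$, so $P_{T_U\mathcal{M}}(\xi)=\xi - U\,\mathrm{sym}(U^\top\xi)$ with $\mathrm{sym}(A)=\tfrac12(A+A^\top)$ (this can be derived in two lines by splitting $U^\top\xi$ into its symmetric and skew parts). The key observation is then that $U^\top\nabla F_{a_U}(U)$ is \emph{already symmetric}, since each of its three summands $U^\top\wh M_{a_U}U$, $U^\top\wh M_{a_U'}U$ and $U^\top C_{a_U}U$ is symmetric (the matrices $\wh M_{a_U},\wh M_{a_U'},C_{a_U}$ all being symmetric). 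Therefore $\mathrm{sym}(U^\top\xi)=U^\top\xi$, the projection reduces to $P_{T_U\mathcal{M}}(\xi)=\xi - UU^\top\xi=(I_d-UU^\top)\xi$, and substituting $\xi=\nabla F_{a_U}(U)$ yields the claimed expression for $\text{grad}\,F(U)$.

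I do not expect a genuine obstacle here; the only points needing care are bookkeeping ones. One is justifying the ``Euclidean subgradient then project'' recipe for the nonsmooth $F$: this is standard (Clarke subdifferential calculus on manifolds), and in fact near any $U$ with $F_{a_U}(U)>F_{a_U'}(U)$ the function $F$ agrees with the smooth $F_{a_U}$, while at a tie either $\nabla F_0(U)$ or $\nabla F_1(U)$ is a legitimate subgradient — which is exactly why the footnote's freedom in choosing $a_U$ is harmless. The other is that differentiability of $F_a$ requires $\inner{UU^\top}{\wh M_a}>0$ on $\mathcal{M}$, which holds as soon as $\wh M_a$ has rank exceeding $k$ (a condition implied by hypothesis (ii) of Theorem~\ref{thm:Wass-refor} whenever $\eps_a>0$, and otherwise a mild and natural assumption on the data).
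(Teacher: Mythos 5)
Your proposal is correct and follows essentially the same route as the paper's proof: take the Euclidean gradient of the active piece $F_{a_U}$, project it onto $T_U\mc M$ via the standard Stiefel projection, and note that $U^\top \nabla F_{a_U}(U)$ is symmetric (equivalently, its skew part vanishes, as the paper phrases it via $U^\top D - D^\top U = 0$ for $D = SU$ with $S$ symmetric), so the projection collapses to multiplication by $I_d - UU^\top$. Your extra remarks on the Clarke subdifferential of the pointwise maximum and on positivity of $\inner{UU^\top}{\wh M_a}$ are harmless refinements of what the paper leaves implicit.
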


%%%%%%%%%%%%%%%%%%%%%%%%%%%%

\subsection{Retractions}
Another important instrument required by the framework in~\citet{li2019weakly} is a retraction of the Stiefel manifold~$\mc M$.
At each iteration, the point $U - \gamma \Delta$ obtained by moving from the current iterate $U$ in the opposite direction of the Riemannian gradient $\Delta$ may not lie on the manifold in general, where $\gamma > 0 $ is the stepsize. In Riemannian optimization, this is circumvented by the concept of retraction. Given a point $U\in \mc M$ on the manifold, the Riemannian gradient $\Delta\in T_U \mc M$ (which must lie in the tangent space $T_U \mc M$) and a stepsize $\gamma$, the retraction map $\text{Rtr}$ defines a point $\text{Rtr}_U (-\gamma \Delta)$ which is guaranteed to lie on the manifold $\mc M$. Roughly speaking, the retraction $\text{Rtr}_U (\cdot) $ approximates the geodesic curve through $U$ along the input tangential direction.
%The notion of \emph{retraction} is a mapping $\text{Rtr}:T\mathcal{M}\mapsto \mathcal{M}$ for sending the resulting point back onto the manifold, where $T\mathcal{M}$ is the tangent bundle of the Stiefel manifold $\mathcal{M}$.
For a formal definition of retractions, we refer the readers to~\cite[Section 4.1]{ref:absil2007optimization}. In this paper, we focus on the following two commonly used retractions for Stiefel manifolds. The first one is the QR decomposition-based retraction using the Q-factor $\text{qf}(\cdot)$ in the QR decomposition:
\begin{equation*}
    \text{Rtr}^{\text{qf}}_U (\Delta) = \text{qf}(U + \Delta), \quad U\in \mathcal{M}, \Delta\in T_U \mathcal{M}.
\end{equation*}
The second one is the polar decomposition-based retraction
\begin{equation}\label{eq:polar_retraction}
    \text{Rtr}^{\text{polar}}_U (\Delta) = (U + \Delta)(I_{d- k } + \Delta^\top \Delta)^{-\frac{1}{2}}, \quad U\in \mathcal{M}, \Delta\in T_U \mathcal{M}.
\end{equation}

\subsection{Algorithm and Convergence Guarantees}
Associated with any choice of retraction $\text{Rtr}$ is a concrete instantiation of the Riemannian subgradient descent algorithm for our problem~\eqreff{opt:F(U)}, which is presented in Algorithm~\ref{alg:RGD-U} with specific choice of the stepsizes $\gamma_t$ motivated by the theoretical results of~\citep{li2019weakly}.
\begin{algorithm}[H]
\caption{Riemannian Subgradient Descent for \eqreff{opt:F(U)}} \label{alg:RGD-U}
\begin{algorithmic}[1]
\STATE \textbf{Input:} An initial point $U_0$, a number of iterations $\tau$ and a retraction $\text{Rtr}: (U,\Delta) \mapsto \text{Rtr}_U (\Delta)$.
\FOR {$t = 0,1,\dots, \tau - 1$,}
\STATE Find $a_t \Let \argmax_{a \in \{0, 1\}}\{F_a(U_t)\}$.
\STATE Compute the Riemannian subgradient $\Delta_t = \text{grad} F(U_t)$ using the formula
\begin{equation*}
    \Delta_t = (I - U_t {U_t}^\top)  \left( \frac{\theta_{a_t}}{\sqrt{\inner{U_tU_t^\top}{\wh M_{a_t}}}} \wh M_{a_t} U_t + \frac{\vartheta_{a_t'}}{\sqrt{\inner{U_tU_t^\top}{\wh M_{a_t'}}}} \wh M_{a_t'}U_t + 2 C_{a_t} U_t \right).
\end{equation*}
\STATE Set $U_{t+1} = \text{Rtr}_{U_t} (-\gamma_t \Delta_t)$, where the step-size $\gamma_t \equiv \frac{1}{\sqrt{\tau+1}}$ is constant.
\ENDFOR
\STATE \textbf{Output:} $U_\tau$.
\end{algorithmic}
\end{algorithm}
% \noindent The 

We now study the convergence guarantee of Algorithm~\ref{alg:RGD-U}. The following lemma shows that the objective function $F$ is Lipschitz continuous (with respect to the Riemannian metric on the Stiefel manifold $\mc M$) with an explicit Lipschitz constant $L$.
\begin{lemma}[Lipschitz continuity]\label{lem:F_Lip}
The function $F$ is $L$-Lipschitz continuous on $\mc M$, where $L>0$ is given by
\begin{equation}\label{eq:Lip_constant}
\begin{split}
    L \Let \max\Bigg\lbrace & \theta_0  \frac{\sigma_{\max}(\wh M_0)}{\sqrt{\sigma_{\min}(\wh M_0)}}, \theta_1  \frac{\sigma_{\max}(\wh M_1)}{\sqrt{\sigma_{\min}(\wh M_1)}}, \vartheta_{0} \frac{\sigma_{\max}(\wh M_{0})}{\sqrt{\sigma_{\min}(\wh M_{0})}}, \vartheta_{1} \frac{\sigma_{\max}(\wh M_{1})}{\sqrt{\sigma_{\min}(\wh M_{1})}},\\
    &  \qquad 2\sqrt{d-k} \sigma_{\max} (C_0) , 2\sqrt{d-k} \sigma_{\max} (C_1) \Bigg\rbrace.
\end{split}
\end{equation}
\end{lemma}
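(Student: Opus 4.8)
The plan is to bound the Riemannian gradient norm $\|\text{grad}\, F(U)\|_F$ uniformly over $U \in \mc M$, since a function whose Riemannian (sub)gradient is bounded by $L$ everywhere on a Riemannian manifold is $L$-Lipschitz with respect to the geodesic distance. Because $F = \max\{F_0, F_1\}$ and the Riemannian subgradient at $U$ is $\text{grad}\, F_{a_U}(U)$ for the active index $a_U$ (by Lemma \ref{lem:Riemannian_subgradient}), it suffices to bound $\|\text{grad}\, F_a(U)\|_F$ for each $a \in \{0,1\}$ and take the maximum; the $\max$ of two $L_a$-Lipschitz functions is $\max\{L_0,L_1\}$-Lipschitz.

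First I would write, for a fixed active pair $(a,a')$,
\[
\text{grad}\, F_a(U) = (I_d - UU^\top)\!\left( \frac{\theta_a}{\sqrt{\inner{UU^\top}{\wh M_a}}}\wh M_a U + \frac{\vartheta_{a'}}{\sqrt{\inner{UU^\top}{\wh M_{a'}}}}\wh M_{a'} U + 2 C_a U\right),
\]
and apply the triangle inequality so that it is enough to bound each of the three summands separately. For the projection factor I would use $\|(I_d - UU^\top) Y\|_F \le \|Y\|_F$ since $I_d - UU^\top$ is an orthogonal projector. For a term of the form $\frac{\theta_a}{\sqrt{\inner{UU^\top}{\wh M_a}}}\wh M_a U$, I would bound $\|\wh M_a U\|_F \le \sigma_{\max}(\wh M_a)\|U\|_F = \sigma_{\max}(\wh M_a)\sqrt{d-k}$ (using $U^\top U = I_{d-k}$), and bound the denominator from below via $\inner{UU^\top}{\wh M_a} = \Tr(U^\top \wh M_a U) \ge \sigma_{\min}(\wh M_a)\|U\|_F^2 = (d-k)\sigma_{\min}(\wh M_a)$; dividing gives $\theta_a \sigma_{\max}(\wh M_a)/\sqrt{\sigma_{\min}(\wh M_a)}$, matching the stated constant (the $\sqrt{d-k}$ factors cancel). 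The $\vartheta_{a'}$ term is handled identically with $\wh M_{a'}$. For the linear term $2 C_a U$, there is no normalization, so I would just bound $\|2 C_a U\|_F \le 2\sigma_{\max}(C_a)\|U\|_F = 2\sqrt{d-k}\,\sigma_{\max}(C_a)$. Collecting the three bounds and then maximizing over the active index $a \in \{0,1\}$, and over which of $a$ or $a'$ contributes each $\theta/\vartheta$ term, yields exactly the six-term maximum in \eqreff{eq:Lip_constant}.

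The main obstacle — really the only subtle point — is the lower bound on $\inner{UU^\top}{\wh M_a}$ appearing in the denominator: one must ensure this quantity is bounded away from zero so the gradient is well defined and the bound is finite. This requires $\sigma_{\min}(\wh M_a) > 0$, i.e. that the empirical second-moment matrices are positive definite; I would state this as a standing assumption (it holds generically when $N_a \ge d$) or note that under condition (ii) of Theorem \ref{thm:Wass-refor} the relevant eigenvalue sums are positive. A secondary bookkeeping issue is that the active pair $(a_U, a_U')$ can switch as $U$ varies, so strictly speaking one argues that $F$ is locally Lipschitz with the uniform constant $L$ on each region where a given index is active, and the global bound on the geodesic distance then follows because $\max$ preserves the Lipschitz constant; alternatively one invokes directly the fact that a bounded Clarke/Riemannian subdifferential implies the Lipschitz estimate.
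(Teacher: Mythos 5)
Your route is genuinely different from the paper's. The paper never touches the (sub)gradient: it first proves the auxiliary Lemma~\ref{lem:Lip_constants}, giving the elementary estimates $|\inner{UU^\top}{M}-\inner{U'U'^\top}{M}|\le 2\sqrt{d-k}\,\sigma_{\max}(M)\|U-U'\|_F$ and $|\sqrt{\inner{UU^\top}{M}}-\sqrt{\inner{U'U'^\top}{M}}|\le \frac{\sigma_{\max}(M)}{\sqrt{\sigma_{\min}(M)}}\|U-U'\|_F$ (the latter via the $1/(2\sqrt{x_{\min}})$-Lipschitzness of $\sqrt{\cdot}$ on $[x_{\min},\infty)$ and $\inner{UU^\top}{M}\ge (d-k)\sigma_{\min}(M)$), and then bounds $|F(U)-F(U')|\le \max_a|F_a(U)-F_a(U')|$ directly in terms of $\|U-U'\|_F$. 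You instead bound the norm of the Riemannian subgradient and invoke ``bounded subgradient implies Lipschitz''; your term-by-term bounds are the exact gradient-level counterparts of the paper's estimates, produce the same six constants, and your positivity requirement $\sigma_{\min}(\wh M_a)>0$ mirrors the positive-definiteness hypothesis of Lemma~\ref{lem:Lip_constants}. Two caveats are worth flagging. First, bounded Riemannian subgradient yields Lipschitzness with respect to the geodesic distance, which dominates $\|U-U'\|_F$, so it does not by itself deliver the Frobenius-norm estimate that the paper proves and that is the form used with the framework of \citet{li2019weakly}; the fix is already contained in your computation, since your bound applies equally to the unprojected Euclidean subgradient $\nabla F_{a_U}(U)$, and convexity of $F$ on $\R^{d\times(d-k)}$ then gives $|F(U)-F(U')|\le \max\{\|\nabla F(U)\|_F,\|\nabla F(U')\|_F\}\,\|U-U'\|_F$ directly. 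Second, ``collecting the three bounds'' is a triangle-inequality sum, so strictly you obtain the maximum over $a$ of the \emph{sum} of the three constants rather than the six-term maximum in \eqreff{eq:Lip_constant}; this off-by-at-most-a-factor-of-three looseness is shared by the paper's own final inequality and is immaterial for Theorem~\ref{thm:convergence}, but neither argument literally yields $L$ as stated.
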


We now proceed to show that Algorithm~\ref{alg:RGD-U} enjoys a sub-linear convergence rate. To state the result, we define the Moreau envelope 
\begin{equation*}
    F_\mu (U) \Let \min_{U' \in \mathcal{M}} \left\{ F(U') + \frac{1}{2\mu}\norm{U' - U}_F^2 \right\}, 
\end{equation*}
where $\| \cdot \|_F$ denotes the Frobenius norm of a matrix. Also, to measure the progress of the algorithm, we need to introduce the proximal mapping on the Stiefel manifold~\citep{li2019weakly}:
\begin{equation*}
    \text{prox}_{\mu F} (U) \in \argmin_{U' \in \mathcal{M}} \left\lbrace F(U') + \frac{1}{2\mu}\norm{U' - U}_F^2 \right\rbrace .
\end{equation*}
From \citet[Equation (22)]{li2019weakly}, we have that
\begin{equation*}
    \norm{\text{grad} F(U)}_F \le \frac{\norm{ \text{prox}_{\mu F} (U) - U }_F}{\mu} \Let \text{gap}_{\mu}(U).
\end{equation*}
Therefore, the number $\text{gap}_{\mu}(U)$ is a good candidate to quantify the progress of optimization algorithms for solving problem~\eqreff{opt:F(U)}.

\begin{theorem}[Convergence guarantee] \label{thm:convergence}
    Let $\{U_{t}\}_{t = 1,\dots,\tau}$ be the sequence of iterates generated by Algorithm~\ref{alg:RGD-U}. Suppose that $\mu = 1/4L$, where $L$ is the Lipschitz constant of $F$ in \eqreff{eq:Lip_constant}. Then, we have
    \begin{equation*}
        \min_{t= 0,\dots,\tau}\text{gap}_{\mu}(U_t) \le \frac{2\sqrt{F_{\mu}(U_0) - \min_U F_\mu (U) + 2L^3 (L+1)}}{(\tau+1)^{1/4}}.
    \end{equation*}
\end{theorem}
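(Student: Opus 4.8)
The plan is to invoke the convergence theory for the Riemannian subgradient method on compact embedded submanifolds developed in \citet{li2019weakly}, and to specialize their generic rate to our setting. Recall from the reparametrization step that $F$ is the maximum of $F_0$ and $F_1$, each of which is convex on $\R^{d\times(d-k)}$ in the ordinary sense; hence $F$ is convex (and in particular $\rho$-weakly convex for any $\rho\ge 0$) as a function on Euclidean space. Together with Lemma~\ref{lem:F_Lip}, which supplies the explicit Lipschitz constant $L$, and the compactness of the Stiefel manifold $\mc M$, the standing assumptions required by \citet{li2019weakly}---weak convexity of the objective, Lipschitz continuity, and second-order boundedness of the retraction---are met: for the last one uses the standard fact that both the QR-factor retraction and the polar retraction $\text{Rtr}^{\text{polar}}$ on $\mc M$ satisfy $\norm{\text{Rtr}_U(\Delta)-U}_F\le \norm{\Delta}_F$ and $\norm{\text{Rtr}_U(\Delta)-U-\Delta}_F\le c\,\norm{\Delta}_F^2$ for an absolute constant $c$ (see \citet{ref:absil2007optimization} and \citet{li2019weakly}).

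First I would record the single-step descent inequality of \citet{li2019weakly} for the Moreau envelope $F_\mu$ along the iterates, using the near-stationarity measure $\text{gap}_\mu$ and the subgradient bound $\norm{\text{grad} F(U_t)}_F\le L$ from Lemma~\ref{lem:Riemannian_subgradient} and Lemma~\ref{lem:F_Lip}. With the constant stepsize $\gamma_t\equiv\gamma$ and the choice $\mu = 1/(4L)$---made precisely so that $F_\mu$ is well defined (the quadratic in the proximal subproblem dominates the concavity induced by the effective weak convexity of $F$ on the curved manifold) and the recursion is valid---one obtains
\[
    F_\mu(U_{t+1}) \le F_\mu(U_t) - c_1\,\gamma\, \text{gap}_\mu(U_t)^2 + c_2\,\gamma^2,
\]
where $c_1,c_2$ depend only on $L$ and the retraction constants. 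Telescoping over $t=0,\dots,\tau$ and using $F_\mu(U_{\tau+1})\ge \min_U F_\mu(U)$ yields
\[
    \min_{t=0,\dots,\tau}\text{gap}_\mu(U_t)^2 \le \frac{F_\mu(U_0)-\min_U F_\mu(U)}{c_1\,\gamma\,(\tau+1)} + \frac{c_2\,\gamma}{c_1}.
\]
Substituting $\gamma=1/\sqrt{\tau+1}$ balances the two terms into a bound of order $\big(F_\mu(U_0)-\min_U F_\mu(U)+c_2/c_1\big)/\sqrt{\tau+1}$; taking square roots and bookkeeping the explicit values of $c_1,c_2$ (which, after plugging in the retraction constants for $\mc M$ and $\mu=1/(4L)$, collapse to the additive term $2L^3(L+1)$ and the leading constant $2$) produces exactly the claimed inequality.

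I expect the main obstacle to be the careful propagation of constants rather than any conceptual difficulty: one has to (i) verify that $\mu=1/(4L)$ lies in the admissible range for the Moreau envelope given the effective weak-convexity modulus of $F$ on $\mc M$, which here is governed by $L$ and the curvature of the Stiefel manifold rather than by any weak-convexity constant of $F$ itself (since $F$ is genuinely convex), and (ii) track the retraction-boundedness constants through the one-step estimate so that the final additive term is exactly $2L^3(L+1)$ and the prefactor is exactly $2$. All of this is a specialization of the main convergence theorem of \citet{li2019weakly}; the only genuinely new inputs are Lemma~\ref{lem:F_Lip} for $L$ and the convexity of $F$ established in the reparametrization, both already in hand.
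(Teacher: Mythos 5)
Your proposal is correct and follows essentially the same route as the paper: the paper's proof simply invokes the convexity of $F$ on $\R^{d\times(d-k)}$, the Lipschitz constant $L$ from Lemma~\ref{lem:F_Lip}, and \citet[Theorem 2]{li2019weakly}, which is exactly the specialization you carry out. Your additional unpacking of the one-step Moreau-envelope descent, the telescoping, and the stepsize balancing is just the internal mechanics of that cited theorem, so no new ideas are needed beyond what you already identified.
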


%%%%%%%%%%%%%%%%%%%%%%%%%%%%
\section{Numerical Experiments}
\label{sec:experiment}
We compare our proposed method, denoted \texttt{RFPCA}, against two state-of-the-art methods for fair PCA: 1) \texttt{FairPCA}~\cite{ref:samadi2018price}\footnote{https://github.com/samirasamadi/Fair-PCA}, and 2) \texttt{CFPCA}~\cite{ref:olfat2019convex}\footnote{https://github.com/molfat66/FairML} with both cases: only mean constraint, and both mean and covariance constraints. We consider a wide variety of datasets with ranging sample sizes and number of features. Further details about the datatasets can be found in Appendix~\ref{appendix:datasets}. The code for all experiments is available in supplementary materials.
We include here some details about the hyper-parameters that we search in the cross-validation steps.

\begin{itemize}[leftmargin=5mm]
\item \texttt{RFPCA}. We notice that the neighborhood size $\eps_a$ should be inversely proportional to the size of subgroup $a$. Indeed, a subgroup with large sample size is likely to have more reliable estimate of the moment information. Then we parameterize the neighborhood size $\eps_a$ by a common scalar $\alpha$, and we have $\eps_a = \alpha / \sqrt{N_a}$, where $N_a$ is the number of samples in group $a$. We search $\alpha \in \{0.05, 0.1, 0.15\}$ and $\lambda \in \{0., 0.5, 1., 1.5, 2.0, 2.5\}$. For better convergence quality, we set the number of iteration for our subgradient descent algorithm to $\tau = 1000$ and also repeat the Riemannian descent for $20$ randomly generated initial point $U_0$.
\item \texttt{FairPCA}. According to \cite{ref:samadi2018price}, we only need tens of iterations for the multiplicative weight algorithm to provide good-quality solution; however, to ensure a fair comparison, we set the number of iterations to $1000$ for the convergence guarantee. We search the learning rate $\eta$ of the algorithm from set of $17$ values evenly spaced in $[0.25, 4.25]$ and $\{0.1\}$.
\item \texttt{CFPCA}. Following \cite{ref:olfat2019convex}, for the mean-constrained version of \texttt{CFPCA}, we search $\delta$ from $\{0., 0.1, 0.3, 0.5, 0.6, 0.7, 0.8, 0.9\}$, and for both the mean and covariance constrained version, we fix $\delta=0$ while searching $\mu$ in $\{0.0001, 0.001, 0.01, 0.05, 0.5\}$.
\end{itemize}
%%%%%%%%%%%%%%%%%%%%%%%%%%%%

\textbf{Trade-offs.} First, we examine the trade-off between the total reconstruction error and the gap between the subgroup error. In this experiment, we only compare our model with \texttt{FairPCA} and \texttt{CFPCA} mean-constraint version. We plot a pareto curve for each of them over the two criteria with different hyper-parameters (hyper-parameters test range are mentioned above). The whole datasets are used for training and evaluation. The results averaged over $5$ runs are shown in Figure~\ref{fig:pareto_curve}. 

In testing methods with different principal components, we first split each dataset into training set and test set with equal size ($50\%$ each), the projection matrix of each method is learned from training set and tested over both sets. In this case, we only compare our method with traditional PCA and \texttt{FairPCA} method. We fix one set hyper-parameters for each method. For \texttt{FairPCA}, we set $\eta = 0.1$ and for \texttt{RFPCA} we set $\alpha=0.15, \lambda=0.5$, others hyper-parameters are kept as discussed before. The results are averaged over $5$ different splits. Figure~\ref{fig:change_pc} shows the consistence of our method performing fair projections over different values of $k$. Our method (cross) exhibits smaller gap of subgroup errors. More results and \review{discussions on the effect of $\eps$} can be found in Appendix~\ref{appendix:visualization}.

\begin{figure}[ht]
\centering
\begin{minipage}{0.5\textwidth}
    \centering
    \includegraphics[width=\textwidth,height=6cm,keepaspectratio]{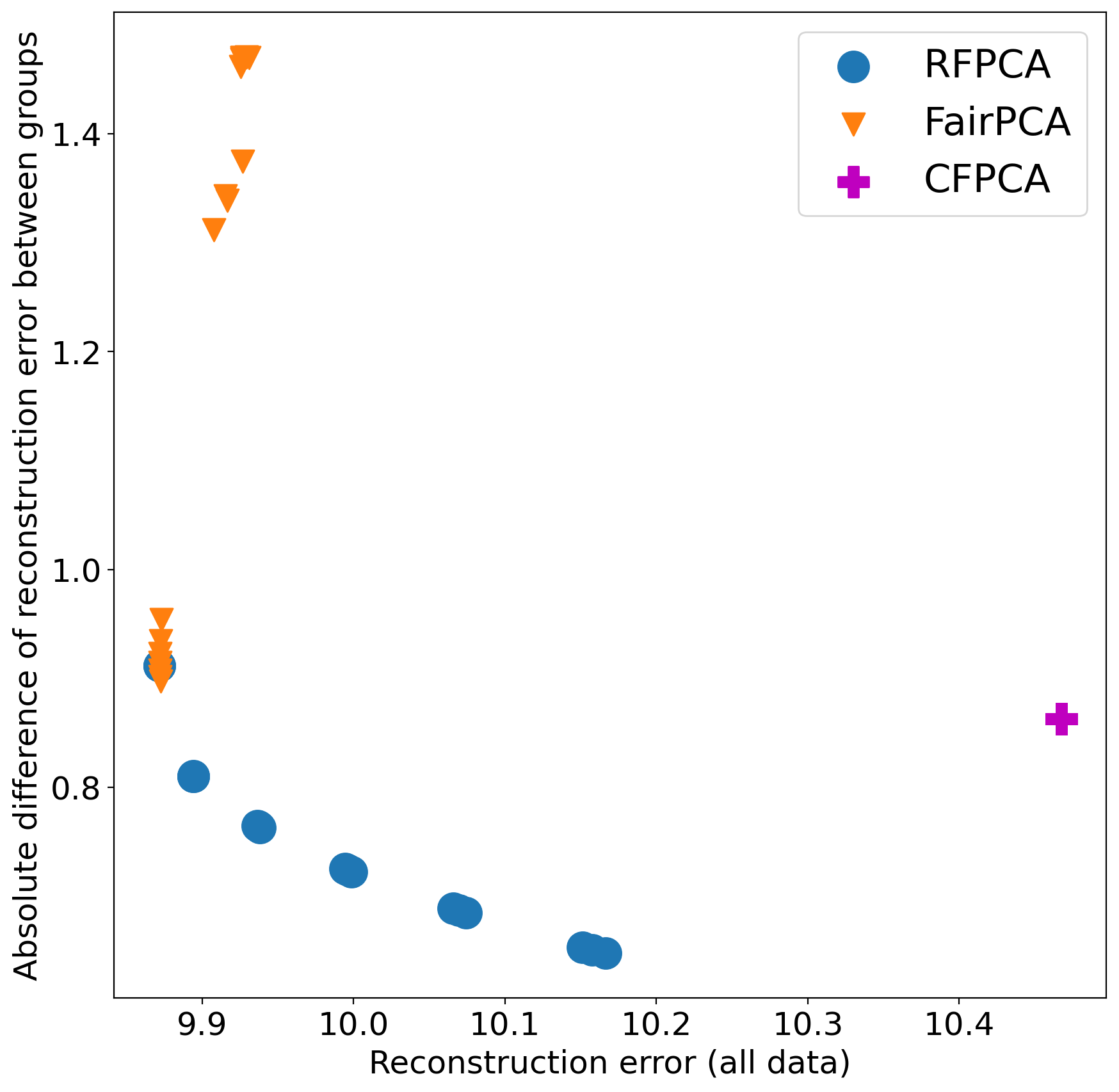}
    \caption{Pareto curves on Default Credit \\
            dataset (all data) with $3$ principal components}
    \label{fig:pareto_curve}
\end{minipage}\hfill
\begin{minipage}{0.5\textwidth}
    \centering
    \includegraphics[width=\textwidth,height=6cm,keepaspectratio]{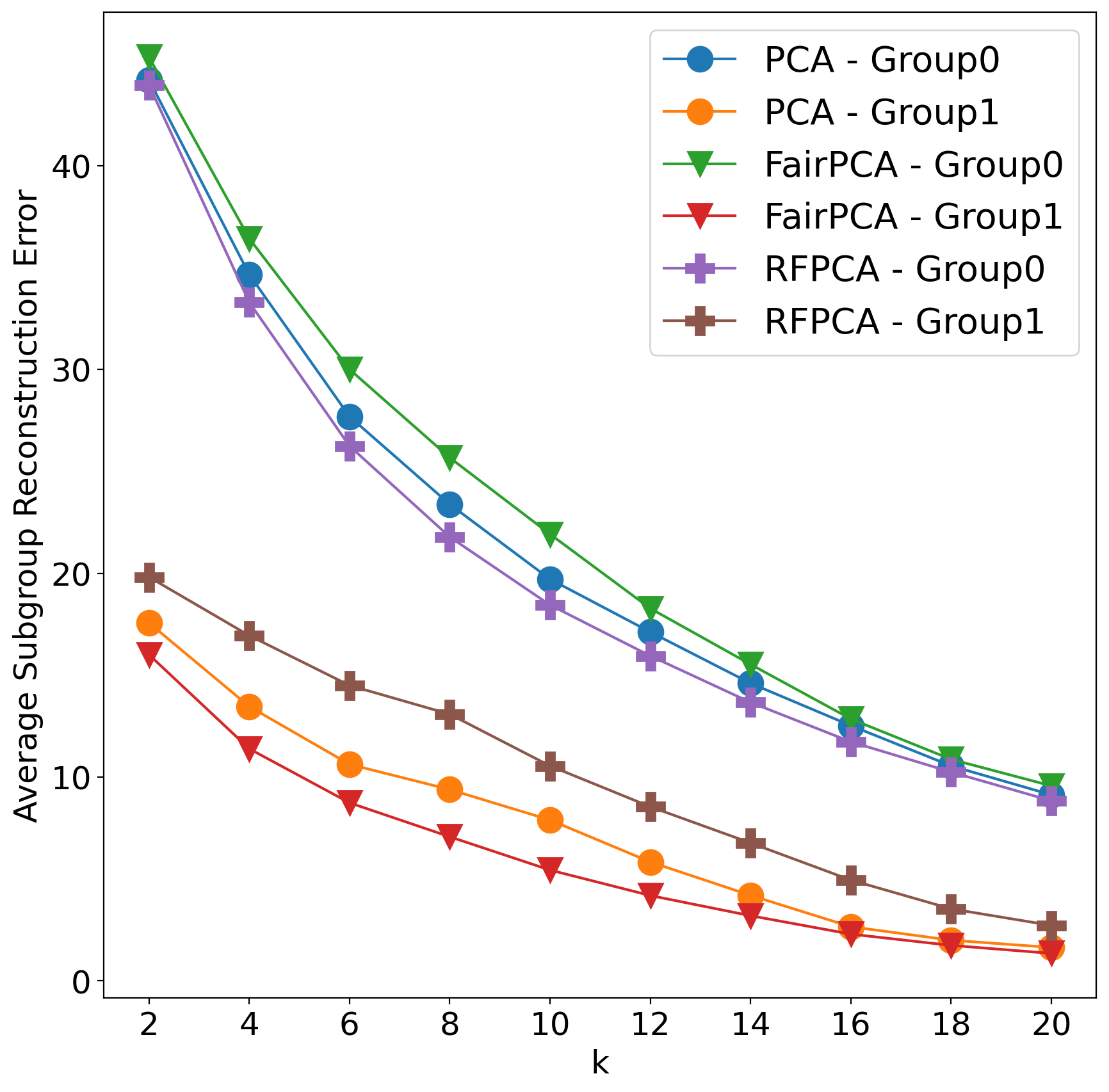}
    \caption{Subgroup average error with different $k$ on Biodeg dataset (Out-of-sample). }
    \label{fig:change_pc}
\end{minipage}
\end{figure}

\textbf{Cross-validations.} Next, we report the performance of all methods based on three criteria: absolute difference between average reconstruction error between groups ($\text{ABDiff.}$), average reconstruction error of all data ($\text{ARE.}$), and the fairness criterion defined by \cite{ref:olfat2019convex} with respect to a linear SVM's classifier family ($\triangle \mathcal F_{Lin}$).\footnote{The code to estimate this quantity is provided at the author's repository} Due to the space constraint, we only include the first two criteria in the main text, see Appendix~\ref{tab:performance_outsample_full} for full results. To emphasize the generalization capacity of each algorithm, we split each dataset into a training set and a test set with ratio of $30\%-70\%$ respectively, and only extract top three principal components from the training set. We find the best hyper-parameters by $3$-fold cross validation, and prioritize the one giving minimum value of the summation ($\text{ABDiff.} + \text{ARE.}$). The results are averaged over $10$ different training-testing splits. We report the performance on both training set (In-sample data) and test set (Out-of-sample data). The details results for Out-of-sample data is given in Table~\ref{tab:performance_outsample}, \review{more details about settings and performance can be found at Appendix~\ref{appendix:additional_results}}.

\textit{Results.} Our proposed \texttt{RFPCA} method outperforms on \review{$11$ out of $15$ datasets} in terms of the subgroup error gap $\text{ABDiff}$, and \review{$9$ out of $15$} with the totall error $\text{ARE.}$~criterion. There are \review{$5$} datasets that \texttt{RFPCA} gives the best results for both criteria, and for the remaining datasets, \texttt{RFPCA} has small performance gaps compared with the best method. 

\begin{table}[!h]
    \centering
    \caption{Out-of-sample errors on real datasets. Bold indicates the lowest error for each dataset.}
    \footnotesize
    \begin{tabular}{c|cc|cc|cc|cc}
        \hline 
        & \multicolumn{2}{|c|}{\texttt{RFPCA}} & \multicolumn{2}{|c|}{\texttt{FairPCA}} & \multicolumn{2}{|c|}{\texttt{CFPCA}-Mean Con.} & \multicolumn{2}{|c}{\texttt{CFPCA} - Both Con.} \\ 
        Dataset & ABDiff. & ARE. & ABDiff.  & ARE. & ABDiff.  & ARE. & ABDiff.  & ARE. \\ \hline
        
Default Credit & $0.9483$ & $\bf 10.3995$ & $1.4401$ & $10.4439$ & $\bf 0.9367$ & $10.9451$ & $3.3359$ & $22.0310$ \\ 

Biodeg & $\bf 23.0066$ & $\bf 33.8571$ & $27.5159$ & $34.6184$ & $29.1728$ & $37.6052$ & $37.9533$ & $50.7090$ \\ 

E. Coli & $1.1500$ & $\bf 1.7210$ & $1.5280$ & $2.4799$ & $\bf 1.1005$ & $2.9466$ & $5.1275$ & $5.6674$ \\ 

Energy & $\bf 0.0125$ & $0.2238$ & $0.0138$ & $\bf 0.2225$ & $0.1229$ & $2.7318$ & $0.1001$ & $7.9511$ \\ 

German Credit & $2.0588$ & $\bf 43.9032$ & $\bf 1.3670$ & $44.0064$ & $1.7845$ & $43.9648$ & $1.4955$ & $49.5014$ \\ 

Image & $\bf 0.7522$ & $\bf 6.0199$ & $1.6129$ & $10.2616$ & $1.1499$ & $14.3725$ & $4.7013$ & $19.3356$ \\ 

Letter & $\bf 0.1712$ & $\bf 7.4176$ & $1.2489$ & $7.4470$ & $0.4427$ & $8.7445$ & $0.5743$ & $15.1779$ \\ 

Magic & $\bf 1.8314$ & $3.9094$ & $2.9405$ & $\bf 3.3815$ & $5.5790$ & $4.2105$ & $8.7810$ & $9.0064$ \\ 

Parkinsons & $\bf 0.3273$ & $5.0597$ & $0.8678$ & $\bf 4.9044$ & $3.3804$ & $5.7260$ & $18.3312$ & $19.7001$ \\ 

SkillCraft & $\bf 0.7669$ & $8.2828$ & $0.7771$ & $\bf 8.2494$ & $1.0283$ & $9.9484$ & $1.2849$ & $15.9751$ \\ 

Statlog & $\bf 0.0838$ & $\bf 3.0998$ & $0.3356$ & $7.9734$ & $0.4476$ & $10.8263$ & $13.8437$ & $35.8268$ \\ 

Steel & $\bf 1.1472$ & $12.5944$ & $1.2208$ & $\bf 12.3096$ & $4.8710$ & $16.4015$ & $3.8084$ & $25.8953$ \\ 

Taiwan Credit & $\bf 0.5523$ & $10.9845$ & $0.5710$ & $\bf 10.9415$ & $0.5744$ & $13.0437$ & $0.9535$ & $21.8963$ \\ 

Wine Quality & $0.6359$ & $\bf 4.2801$ & $\bf 0.3046$ & $6.0936$ & $1.5020$ & $6.1118$ & $3.0451$ & $10.1001$ \\

\review{LFW} & \review{$\bf 0.4463$} & \review{$\bf 7.6229$} & \review{$0.5340$} & \review{$7.6361$} & \multicolumn{4}{c}{\review{fail to converge}} \\

 \hline
           
    \end{tabular}
    \label{tab:performance_outsample}
\end{table}

\newpage
\bibliography{iclr2022_conference.bbl}
\bibliographystyle{iclr2022_conference}

\begin{appendix}
\section{Proofs}

\subsection{Proofs of Section~\ref{sec:fairPCA}}

\begin{proof}[Proof of Proposition~\ref{prop:impossibility}]
Let $S =  \EE_{\QQ} [ XX^\top | A = 0 ] - \EE_{\QQ}[XX^\top | A = 1]$.
We first prove the ``only if'' direction.
Suppose that there exists a fair projection matrix $V\in\mc M_k$ relative to $\QQ$. Let $U\in \mc M_{d-k}$ be a complement matrix of $V$. Then, Definition~\ref{def:fair-PCA} can be rewritten as
\begin{equation*}
    \langle UU^\top , S \rangle = 0,
\end{equation*}
which implies that the null space of $S$ has a dimension at least $d - k$. By the rank-nullity duality, we have $\mathrm{rank}(S) \le k$. 

Next, we prove the ``if'' direction. Suppose that $\mathrm{rank}(S) \le k$.
Then, the matrix $S$ has at least $d - k$ (repeated) zero eigenvalues. Let $U\in \mc M_{d-k}$ be an orthonormal matrix whose columns are any $d-k$ eigenvectors corresponding to the zero eigenvalues of $S$ and $V\in \mc M_k$ be a complement matrix of $U$. 
Then,
\[ \langle I_d - VV^\top , S \rangle = \langle UU^\top , S \rangle = 0 .\]
Therefore, $V$ is a fair projection matrix relative to $\QQ$. This completes the proof.
\end{proof}

\subsection{Proof of Section~\ref{sec:Wass}}

\begin{proof}[Proofs of Proposition~\ref{prop:Wass-refor}]
By exploiting the definition of the loss function $\ell$, we find
\begin{align*}
    &\Sup{\QQ_a: \mathds W( \QQ_a, \Pnom_a) \le \eps_a}~\upsilon \EE_{\QQ_a}[\ell(V, X) ] \\
    =& \left\{
	\begin{array}{cl}
	\Sup{\m_a, \cov_a} & \Trace{\upsilon (I - VV^\top) (\cov_a + \m_a\m_a^\top)} \\
	\st &\| \m_a - \msa_a \|_2^2 + \Trace{\cov_a + \covsa_a - 2 \big( \covsa_a^{\half} \cov_a \covsa_a^{\half} \big)^{\half} } \le \eps_a
	\end{array}
	\right.\\
	=& \left\{
	\begin{array}{cl}
	\inf & \dualvar (\eps_a - \Trace{\covsa_a}) + \dualvar^2 \Trace{(\dualvar I - \upsilon (I - VV^\top))^{-1} \covsa_a} + \tau\\
	\st & \begin{bmatrix} 
	    \dualvar I - \upsilon (I - VV^\top) & \dualvar \msa_a \\
	    \dualvar \msa_a^\top & \dualvar \| \msa_a\|_2^2 + \tau
	\end{bmatrix}
	\succeq 0, \quad \dualvar I \succ \upsilon (I - VV^\top), \quad \gamma \ge 0,
	\end{array}
	\right.
\end{align*}
where the last equality follows from~\citet[Lemma~3.22]{ref:nguyen2019adversarial}. By the Woodbury matrix inversion, we have
\[
(\dualvar I - \upsilon (I - VV^\top))^{-1} = \dualvar^{-1} I - \frac{\upsilon}{\dualvar (\upsilon - \dualvar)} (I - VV^\top).
\]
Moreover, using the Schur complement, the semidefinite constraint is equivalent to
\[
\dualvar \|\msa_a\|_2^2 + \tau \ge \dualvar^2 \msa_a^\top (\dualvar I - \upsilon (I - VV^\top))^{-1} \msa_a,
\]
which implies that at optimality, we have
\[
    \tau = \frac{\upsilon \dualvar}{\dualvar - \upsilon} \msa_a^\top (I - VV^\top) \msa_a.
\]
At the same time, the constraint $\dualvar I \succ \upsilon (I - VV^\top)$ is equivalent to $\dualvar > \upsilon$. Combining all previous equations, we have
\[
\Sup{\QQ_a: \Wass( \QQ_a, \Pnom_a) \le \eps_a}~\upsilon \EE_{\QQ_a}[\ell(V, X) ] =  \Inf{\dualvar > \max\{0, \upsilon\}}~ \dualvar  \eps_a + \frac{\gamma \upsilon}{\gamma - \upsilon} \inner{ I_d - VV^\top}{\wh M_a}.
\]
The dual optimal solution $\gamma\opt$ is given by
\begin{equation*}
\gamma\opt = 
\begin{cases}
    \upsilon\left( 1 + \sqrt{\frac{\inner{ I_d - VV^\top}{\wh M_a}}{\eps_a}}\right) & \text{if } \upsilon \ge 0, \\
    \upsilon\left( 1 - \sqrt{\frac{\inner{ I_d - VV^\top}{\wh M_a}}{\eps_a}}\right) & \text{if } \upsilon < 0 \text{ and } \inner{ I_d - VV^\top}{\wh M_a} \ge \eps_a, \\
    0 & \text{if } \upsilon < 0 \text{ and } \inner{ I_d - VV^\top}{\wh M_a} < \eps_a.
\end{cases}
\end{equation*}
Note that $\gamma\opt \ge \max\{0, \upsilon\}$ in all the cases. Therefore, we have
\begin{align*}
    &\Sup{\QQ_a: \Wass( \QQ_a, \Pnom_a) \le \eps_a}~\upsilon \EE_{\QQ_a}[\ell(V, X) ] \\
    & \qquad \qquad = 
    \begin{cases}
    \upsilon\left( \sqrt{\eps_a} + \sqrt{\inner{ I_d - VV^\top}{\wh M_a}}\right)^2 & \text{if } \upsilon \ge 0, \\
    \upsilon\left( \sqrt{\eps_a} - \sqrt{\inner{ I_d - VV^\top}{\wh M_a}}\right)^2 & \text{if } \upsilon < 0 \text{ and } \inner{ I_d - VV^\top}{\wh M_a} \ge \eps_a, \\
    0 & \text{if } \upsilon < 0 \text{ and } \inner{ I_d - VV^\top}{\wh M_a} < \eps_a.
\end{cases}
\end{align*}
This completes the proof. 
\end{proof}

We are now ready to prove Theorem~\ref{thm:Wass-refor}.
\begin{proof}[Proof of Theorem~\ref{thm:Wass-refor}]
     By expanding the absolute value, problem~\eqreff{eq:fair-dro} is equivalent to
    \[
        \Min{V \in \R^{d \times k}, V^\top V = I_k}~ \max\{ J_0(V), J_1(V)\},
    \]
    where for each $(a, a') \in \{(0, 1), (1, 0)\}$, we can re-express $J_a$ as
    \[
        J_a(V) = \Sup{\QQ_a: \Wass(\QQ_{a}, \Pnom_{a}) \le \eps_{a}}~(\wh p_a + \lambda) \EE_{\QQ_a}[\ell(V, X) ] + \Sup{\QQ_{a'}: \Wass(\QQ_{a'}, \Pnom_{a'}) \le \eps_{a'}} (\wh p_{a'} - \lambda) \EE_{\QQ_{a'}}[\ell(V, X) ]
    \]
    Using Proposition~\ref{prop:Wass-refor} to reformulate the two individual supremum problems, we have
    \begin{align*} 
         J_a(V) &= (\wh p_a + \lambda) \eps_a + 2| \wh p_a + \lambda|  \sqrt{\eps_a \inner{I_d - VV^\top}{\wh M_a}} + (\wh p_a + \lambda) \inner{I_d - VV^\top}{\wh M_a} \\
         &\quad + (\wh p_{a'} - \lambda) \eps_{a'} + 2| \wh p_{a'} - \lambda| \sqrt{ \eps_{a'} \inner{I_d - VV^\top}{\wh M_{a'}}} + (\wh p_{a'} - \lambda) \inner{I_d - VV^\top}{\wh M_{a'}}.
    \end{align*}
    By defining the necessary parameters $\kappa$, $\theta$, $\vartheta$ and $C$ as in the statement of the theorem, we arrive at the postulated result.
\end{proof}

\subsection{Proofs of Section~\ref{sec:riemann}}
\begin{proof}[Proof of Lemma~\ref{lem:Riemannian_subgradient}]
    Let $a_U \in \arg\max_{a \in \{0, 1\}} F_a(U)$ and $a_U' = 1 - a_U$.
Then, an Euclidean subgradient of $F$ is given by
\begin{equation*}
    \nabla F(U) = \frac{\theta_{a_U}}{\sqrt{\inner{UU^\top}{\wh M_{a_U}}}} \wh M_{a_U}U + \frac{\vartheta_{a_U'}}{\sqrt{\inner{UU^\top}{\wh M_{a_U'}}}} \wh M_{a_U'} U + 2 C_{a_U} U \in \R^{d \times (d-k)}.
\end{equation*}
The tangent space of the Stiefel manifold $\mathcal{M}$ at $U$ is given by
\[
    T_U \mc M = \{ \Delta \in \R^{d \times (d-k)}: \Delta^\top U + U^\top \Delta = 0 \},
\]
whose orthogonal projection~\cite[Example~3.6.2]{ref:absil2007optimization} can be computed explicitly via
\[
    \Proj_{T_U \mc M}(D) = (I_d - UU^\top)D + \frac{1}{2} U(U^\top D - D^\top U),\quad D \in \R^{d\times (d-k)}.
\]
Therefore, a Riemannian subgradient of $F$ at any point $U\in\mathcal{M}$ is given by
\begin{align*}
    \text{grad} F(U) &= \Proj_{T_U \mc M}(\nabla F(U)) \notag \\
    &= (I_d - UU^\top) \left( \frac{\theta_{a_U}}{\sqrt{\inner{UU^\top}{\wh M_{a_U}}}} \wh M_{a_U} U + \frac{\vartheta_{a_U'}}{\sqrt{\inner{UU^\top}{\wh M_{a_U'}}}} \wh M_{a_U'}U + 2 C_{a_U} U \right). 
    %&= (I_d - UU^\top) \left( \frac{\alpha_{i_U}}{\sqrt{\inner{UU^\top}{A}}} AU + \frac{\beta_{i_U}}{\sqrt{\inner{UU^\top}{B}}} BU + 2 C_{i_U} U \right). \label{eq:R_grad_F}
\end{align*}
In the last line, we have used the fact that, if $D = SU$ for some symmetric matrix $S$, then 
\[ U^\top D - D^\top U = U^\top SU - U^\top S^\top U = 0 .\]
This completes the proof.
\end{proof}

The proof of Lemma~\ref{lem:F_Lip} relies on the following preliminary result. 
\begin{lemma}\label{lem:Lip_constants}
Let $M\in \R^{(d-k)\times (d-k)}$ be a positive definite matrix. Then,
\begin{equation}\label{ineq:Lip_1}
     \left| \inner{UU^\top}{M} - \inner{U'U'^\top}{M} \right| \le 2 \sqrt{d-k} \sigma_{\max} (M) \|U - U'\|_F\quad \forall U,U' \in \mc M,
\end{equation}
and
\begin{equation}\label{ineq:Lip_2}
    \left| \sqrt{\inner{UU^\top}{M}} - \sqrt{\inner{U'U'^\top}{M}} \right| \le \frac{\sigma_{\max}(M)}{\sqrt{\sigma_{\min}(M)}} \|U - U'\|_F\quad \forall U,U' \in \mc M,
\end{equation}
where $\sigma_{\max} (M)$ and $\sigma_{\min} (M)$ denote the maximum and minimum eigenvalues of the matrix $M$.
\end{lemma}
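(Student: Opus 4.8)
The plan is to establish the two estimates in turn, obtaining the second from the first via the elementary identity $\sqrt a - \sqrt b = (a-b)/(\sqrt a + \sqrt b)$ together with a lower bound on the denominator.

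For \eqreff{ineq:Lip_1}, I would start from $\inner{UU^\top}{M} = \Tr(U^\top M U)$ and telescope the difference by adding and subtracting $U^\top M U'$:
\[
\Tr(U^\top M U) - \Tr(U'^\top M U') = \Tr\big(U^\top M (U-U')\big) + \Tr\big((U-U')^\top M U'\big).
\]
Each term on the right is an inner product of two matrices, so Cauchy--Schwarz gives, for instance, $|\Tr(U^\top M(U-U'))| \le \|U\|_F\,\|M(U-U')\|_F \le \sigma_{\max}(M)\,\|U\|_F\,\|U-U'\|_F$, where we used that the spectral norm of the symmetric matrix $M$ equals $\sigma_{\max}(M)$. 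Since $U$ and $U'$ lie on the Stiefel manifold $\mc M$, we have $\|U\|_F = \|U'\|_F = \sqrt{\Tr(U^\top U)} = \sqrt{d-k}$; bounding the two terms in this way and adding them produces precisely the factor $2\sqrt{d-k}\,\sigma_{\max}(M)$, which is \eqreff{ineq:Lip_1}.

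For \eqreff{ineq:Lip_2}, set $a = \inner{UU^\top}{M}$ and $b = \inner{U'U'^\top}{M}$, so that $|\sqrt a - \sqrt b| = |a - b|/(\sqrt a + \sqrt b)$; the numerator is already controlled by \eqreff{ineq:Lip_1}. For the denominator I would invoke positive definiteness of $M$: from $M \succeq \sigma_{\min}(M)\,I$ we get $a = \Tr(U^\top M U) \ge \sigma_{\min}(M)\,\Tr(U^\top U) = (d-k)\,\sigma_{\min}(M) \ge \sigma_{\min}(M)$, and likewise $b \ge (d-k)\,\sigma_{\min}(M)$, hence $\sqrt a + \sqrt b \ge 2\sqrt{(d-k)\,\sigma_{\min}(M)}$. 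Substituting the two bounds, the $\sqrt{d-k}$ factors cancel and what remains is $\sigma_{\max}(M)/\sqrt{\sigma_{\min}(M)}$, as claimed.

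I do not expect a genuine obstacle here: the argument amounts to two applications of Cauchy--Schwarz and one eigenvalue estimate. The only points that warrant care are carrying out the telescoping so that the resulting bound is symmetric in $U$ and $U'$, and observing that it is the \emph{positive definiteness} of $M$ — not merely positive semidefiniteness — that keeps $\sqrt a + \sqrt b$ bounded away from zero in the second step; the degenerate case $d-k=0$ makes the statement vacuous and can be excluded at the outset.
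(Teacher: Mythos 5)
Your proposal is correct and follows essentially the same route as the paper: the first bound is obtained by the same add-and-subtract of the cross term $\Tr(U^\top M U')$ followed by Cauchy--Schwarz with $\|U\|_F=\|U'\|_F=\sqrt{d-k}$ and $\|M(U-U')\|_F\le\sigma_{\max}(M)\|U-U'\|_F$, and the second follows from the first via the lower bound $\inner{UU^\top}{M}\ge (d-k)\sigma_{\min}(M)$, which is exactly the paper's use of the $1/(2\sqrt{x_{\min}})$-Lipschitz property of the square root (your identity $\sqrt{a}-\sqrt{b}=(a-b)/(\sqrt{a}+\sqrt{b})$ is the same estimate in disguise). No gaps.
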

\begin{proof}[Proof of Lemma~\ref{lem:Lip_constants}]
For inequality~\eqreff{ineq:Lip_1}, 
\begin{align*}
    \left| \inner{UU^\top}{M} - \inner{U'U'^\top}{M} \right| & \le \left| \inner{UU^\top}{M} - \inner{UU'^\top}{M} \right| + \left| \inner{UU'^\top}{M} - \inner{U'U'^\top}{M} \right| \\
    & \le  \left| \inner{U}{M(U - U')}  \right| + \left| \inner{U'}{M(U - U')}  \right| \\
    & \le \| U \|_F \| M(U - U') \|_F + \| U' \|_F \| M(U - U') \|_F \\
    & = 2 \sqrt{d-k} \sigma_{\max} \| U - U'\|_F.
\end{align*}
For inequality~\eqreff{ineq:Lip_2}, we first note that the function $x \mapsto \sqrt{x}$ is $1/(2 \sqrt{x_{\min}})$-Lipschitz on $[x_{\min}, +\infty)$ and that 
\begin{equation*}
     \inner{UU^\top}{M} \ge (d-k)\sigma_{\min}(M) \quad \forall U \in \mc M .
\end{equation*}
Therefore,
\begin{align*}
     \left| \sqrt{\inner{UU^\top}{M}} - \sqrt{\inner{U'U'^\top}{M}} \right| & \le \frac{1}{2 \sqrt{(d-k)\sigma_{\min}(M)}} \left| \inner{UU^\top}{M} - \inner{U'U'^\top}{M} \right| \\
     & \le \frac{\sigma_{\max}(M)}{\sqrt{\sigma_{\min}(M)}} \|U - U'\|_F,
\end{align*}
where the last inequality follows from \eqreff{ineq:Lip_1}. This completes the proof.
\end{proof}

We are now ready to prove Lemma~\ref{lem:F_Lip}.
\begin{proof}[Proof of Lemma~\ref{lem:F_Lip}]
Let $U$, $U'\in \mc M$ be two arbitrary points. We have
\begin{align*}
    &\left| F(U) - F(U') \right| \\
    & = \left| \max\left\lbrace F_0 (U), F_1(U) \right\rbrace - \max\left\lbrace F_0 (U'), F_1(U') \right\rbrace \right| \\
    & \le \max_{a \in \{0, 1\}} \left| F_a (U) - F_a(U') \right| \\
    & \le \max_{a \in \{0, 1\}} \max\left\lbrace \theta_a  \frac{\sigma_{\max}(\wh M_a)}{\sqrt{\sigma_{\min}(\wh M_a)}}, \vartheta_{1-a} \frac{\sigma_{\max}(\wh M_{1-a})}{\sqrt{\sigma_{\min}(\wh M_{1-a})}}, 2\sqrt{d-k} \sigma_{\max} (C_a)  \right\rbrace \|U - U'\|_F,
\end{align*}
where the last inequality follows from the definition of $F_a$ and Lemma~\ref{lem:Lip_constants}. This completes the proof.
\end{proof}

\begin{proof}[Proof of Theorem~\ref{thm:convergence}]
The proof follows from the fact that $F$ is convex on the Euclidean space $\R^{d\times (d-k)}$, Lemma~\ref{lem:F_Lip} and \citet[Theorem 2]{li2019weakly} (and the remarks following it).
\end{proof}

%%%%%%%%%%%%%%%%%%%%%%%%%%%%%%%%
\section{Extension to Non-binary Sensitive Attributes}
\label{sec:non-binary}

\review{
The main paper focuses on the case of a binary sensitive attribute with $\mc A = \{0, 1\}$. In this appendix, we extend our approach to the case when the sensitive attribute is non-binary. Concretely, we suppose that the sensitive attribute $A$ can take on any of the $m$ possible values from 1 to $m$. In other words, the attribute space now becomes $\mc A = \{1, \ldots, m\}$. 

\begin{definition}[Generalized unfairness measure] \label{def:gen-unfair}
    The generalized unfairness measure is defined as the maximum pairwise unfairness measure, that is,
    \[
        \mathds U_{\max}(V, \QQ) \Let \max_{(a, a') \in \mc A \times \mc A} | \EE_{\QQ}[\ell(V, X) | A = a] - \EE_{\QQ}[\ell(V, X) | A = a'] |.
    \]
\end{definition}

Notice that if $\mc A = \{0, 1\}$, then $\mathds U_{\max} \equiv \mathds U$ recovers the unfairness measure for binary sensitive attribute defined in Section~\ref{sec:fair-pca}.
We now consider the following generalized fairness-aware PCA problem
\be \label{eq:fair-dro-gen}
    \Min{V \in \R^{d \times k}, V^\top V = I_k} \Sup{\QQ \in \mbb B(\Pnom)}~ \EE_{\QQ}[ \ell(V, X)] + \lambda \UU_{\max}(V, \QQ).
\ee
Here we recall that the ambiguity set $\mbb B(\Pnom)$ is defined in \eqreff{eq:B-def}. The next theorem provides the reformulation of~\eqreff{eq:fair-dro-gen}.

\begin{theorem}[Reformulation of non-binary fairness-aware PCA] \label{thm:refor-gen}
Suppose that for any $a\in \mc A$, either of the following two conditions holds:
\begin{enumerate}[label=(\roman*)]
    \item Marginal probability bounds: $0\le \lambda \le  \wh p_a$, 
    \item Eigenvalue bounds: the empirical second moment matrix $\wh M_a = \frac{1}{N_a} \sum_{i \in \mc I_a} \wh x_i \wh x_i^\top $ satisfies $\sum_{j = 1}^{d-k} \sigma_j(\wh M_a) \ge \eps_a$, where $\sigma_j(\wh M_a)$ is the $j$-th smallest eigenvalues of $\wh M_a$.
\end{enumerate}
Then problem \eqreff{eq:fair-dro-gen} is equivalent to
\begin{equation*}
    \Min{V \in \R^{d \times k}, V^\top V = I_k} \max_{a\neq a'} \left\lbrace \sum_{b \in \mc A} 2 c_{a,a',b} \sqrt{\eps_b \langle I_d - VV^\top , \wh M_b \rangle} + \lambda \langle I_d - VV^\top , \wh M_a - \wh M_{a'} \rangle + \lambda (\eps_a - \eps_{a'}) \right\rbrace, 
\end{equation*}
where the parameter $c_{a, a', b}$ admits values
\begin{equation*}
    c_{a,a',b} = 
    \begin{cases}
      \wh p_a + \lambda & \text{if } b = a,\\
      |\wh p_{a'} - \lambda| & \text{if } b = a', \\
      \wh p_b & \text{otherwise.}
    \end{cases}
\end{equation*}
\end{theorem}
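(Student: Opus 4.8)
The plan is to retrace the proof of Theorem~\ref{thm:Wass-refor}, adapting each step to the finite index set $\mc A = \{1, \dots, m\}$. First I would expand the generalized unfairness term. Since interchanging $a$ and $a'$ flips the sign inside the absolute value,
\[
    \UU_{\max}(V, \QQ) = \max_{a \neq a'}\Big(\EE_{\QQ}[\ell(V,X)\mid A = a] - \EE_{\QQ}[\ell(V,X)\mid A = a']\Big),
\]
and, because every $\QQ \in \mbb B(\Pnom)$ has its $A$-marginal fixed to $\wh p$, the law of total expectation gives $\EE_{\QQ}[\ell(V,X)] = \sum_{b \in \mc A}\wh p_b\,\EE_{\QQ_b}[\ell(V,X)]$, where $\EE_{\QQ_b}[\cdot] = \EE_{\QQ}[\cdot \mid A = b]$. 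Therefore the inner supremum in \eqreff{eq:fair-dro-gen} equals $\Sup{\QQ \in \mbb B(\Pnom)}\max_{a \neq a'}\sum_{b \in \mc A}\upsilon_b^{a,a'}\EE_{\QQ_b}[\ell(V,X)]$, where $\upsilon_a^{a,a'} = \wh p_a + \lambda$, $\upsilon_{a'}^{a,a'} = \wh p_{a'} - \lambda$, and $\upsilon_b^{a,a'} = \wh p_b$ for $b \notin \{a, a'\}$.

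Next, because the index set $\{(a,a') : a \neq a'\}$ is finite, the supremum and the maximum commute, so the inner problem equals $\max_{a\neq a'}\Sup{\QQ \in \mbb B(\Pnom)}\sum_{b}\upsilon_b^{a,a'}\EE_{\QQ_b}[\ell(V,X)]$. For each fixed pair $(a,a')$ the objective is separable across $b$ and the conditionals $\QQ_b$ are constrained independently by $\Wass(\QQ_b, \Pnom_b) \le \eps_b$; hence the supremum over $\QQ$ splits into the sum over $b \in \mc A$ of the one-dimensional problems $\Sup{\QQ_b : \Wass(\QQ_b, \Pnom_b) \le \eps_b}\upsilon_b^{a,a'}\EE_{\QQ_b}[\ell(V,X)]$.

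I would then apply Proposition~\ref{prop:Wass-refor} to each of these $m$ terms. For $b \neq a'$ the weight $\upsilon_b^{a,a'}\ge 0$, so its first branch applies. The only potentially negative weight is $\upsilon_{a'}^{a,a'} = \wh p_{a'} - \lambda$, and here the two hypotheses enter: under (i) this weight is nonnegative and the first branch again applies, while under (ii) one must exclude the degenerate third branch by checking that $\langle I_d - VV^\top, \wh M_{a'}\rangle \ge \eps_{a'}$ for \emph{every} feasible $V$. This follows because $I_d - VV^\top = UU^\top$ is the orthogonal projector onto a $(d-k)$-dimensional subspace, so $\langle I_d - VV^\top, \wh M_{a'}\rangle = \Tr(U^\top \wh M_{a'} U)$, which by the Ky Fan / Courant--Fischer minimum principle is at least $\sum_{j=1}^{d-k}\sigma_j(\wh M_{a'}) \ge \eps_{a'}$. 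In all cases a short expansion shows the one-dimensional supremum equals
\[
    \upsilon_b^{a,a'}\langle I_d - VV^\top, \wh M_b\rangle \;+\; 2\,|\upsilon_b^{a,a'}|\,\sqrt{\eps_b\,\langle I_d - VV^\top, \wh M_b\rangle} \;+\; \upsilon_b^{a,a'}\eps_b,
\]
and one checks directly that $|\upsilon_b^{a,a'}| = c_{a,a',b}$.

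Finally, I would sum this expression over $b \in \mc A$, using $\sum_b \upsilon_b^{a,a'}\wh M_b = \sum_b \wh p_b \wh M_b + \lambda(\wh M_a - \wh M_{a'})$ and $\sum_b \upsilon_b^{a,a'}\eps_b = \sum_b \wh p_b \eps_b + \lambda(\eps_a - \eps_{a'})$ to group the linear-in-$(I_d - VV^\top)$ and constant contributions, then take $\max_{a\neq a'}$ and $\min$ over $V$ to arrive at the stated reformulation, exactly as in the binary case. The main obstacle is the case-(ii) verification in the third step: it is precisely the eigenvalue bound $\sum_{j=1}^{d-k}\sigma_j(\wh M_{a'}) \ge \eps_{a'}$ that forces $\langle I_d - VV^\top, \wh M_{a'}\rangle \ge \eps_{a'}$ uniformly over the Stiefel constraint, which is what licenses using the middle (rather than the discontinuous third) branch of Proposition~\ref{prop:Wass-refor} and keeps the closed form valid for all feasible $V$; everything else is bookkeeping in the spirit of the binary proof.
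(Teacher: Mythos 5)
Your proposal follows essentially the same route as the paper's own proof: expand $\UU_{\max}$ as a maximum over ordered pairs, use the fixed marginals $\wh p_b$ to write the worst-case objective as a $b$-separable sum, exchange the supremum with the finite maximum, apply Proposition~\ref{prop:Wass-refor} to each conditional term, and regroup the coefficients into $c_{a,a',b}$. Your explicit Ky Fan/Courant--Fischer verification that condition (ii) guarantees $\langle I_d - VV^\top, \wh M_{a'}\rangle \ge \eps_{a'}$ for every feasible $V$ (so the middle branch of Proposition~\ref{prop:Wass-refor} applies) is a detail the paper leaves implicit, but otherwise the two arguments coincide.
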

}
\begin{proof}[Proof of Theorem~\ref{thm:refor-gen}]
For simplicity, we let $E(V, \mbb Q, b) = \EE_{\QQ}[\ell(V,X)|A = b]$. Then, the objective function of problem~\eqreff{eq:fair-dro-gen} can be re-written as
\begin{align*}
    & \Sup{\QQ \in \mbb B(\Pnom)}~ \EE_{\QQ}[ \ell(V, X)] + \lambda \UU_{\max}(V, \QQ) \\
    = & \Sup{\QQ \in \mbb B(\Pnom)}~\sum_{b\in \mc A} \wh p_b E(V,\QQ , b) + \lambda \max_{a\neq a'} \left\lbrace E(V, \QQ, a) - E(V, \QQ, a') \right\rbrace \\
    =& \max_{a\neq a'} \left\lbrace \sum_{b\neq a, a'} \sup_{\mathds W(\QQ_b, \Pnom_b) \le \eps_b} \wh p_b E(V,\QQ_b, b) + \sup_{\mathds W(\QQ_{a}, \Pnom_{a}) \le \eps_{a}} (\wh p_a+\lambda) E(V,\QQ_a, a) + \sup_{\mathds W(\QQ_{a'}, \Pnom_{a'}) \le \eps_{a'}} (\wh p_{a'} - \lambda) E(V, \QQ_{a'} , a') \right\rbrace\\
    = & \max_{a\neq a'} \Bigg\lbrace \sum_{b\neq a, a'} \wh p_b \left( \sqrt{ \langle I_d - VV^\top , \wh M_b \rangle} + \sqrt{\eps_b} \right)^2 +  (\wh p_a+\lambda) \left( \sqrt{ \langle I_d - VV^\top , \wh M_a \rangle} + \sqrt{\eps_a} \right)^2  \\
    &\qquad+  (\wh p_{a'} - \lambda) \left( \sqrt{ \langle I_d - VV^\top , \wh M_{a'} \rangle} + \text{sgn}(\wh p_{a'} - \lambda) \sqrt{\eps_{a'}} \right)^2 \Bigg\rbrace \\
    =& \max_{a\neq a'} \Bigg\lbrace \sum_{b\in\mc A} \wh p_b \left( \langle I_d - VV^\top , \wh M_b \rangle + \eps_b \right) + \sum_{b\in\mc A} 2 c_{a,a',b} \sqrt{\eps_b \langle I_d - VV^\top , \wh M_b \rangle} \\
    & \qquad+ \lambda \left( \langle I_d - VV^\top , \wh M_a - \wh M_{a'} \rangle + \eps_a - \eps_{a'} \right) \Bigg\rbrace,
\end{align*}
where the first equality follows from the definition of $\UU_{\max}(V, \QQ)$ and $E(V,\QQ , b)$, the second from the definition~\eqreff{eq:B-def} of the ambiguity set $\mbb B(\Pnom)$, the third from Proposition~\ref{prop:Wass-refor} and the fourth from the definition of $c_{a,a',b}$. Noting that the first sum in the above maximization is independent of $a$ and $a'$, the proof is completed.
\end{proof}

\review{
Theorem~\ref{eq:fair-dro-gen} indicates that if the sensitive attribute admits finite values, then the distributionally robust fairness-aware PCA problem using an $\mathds U_{\max}$ unfairness measure can be reformulated as an optimization problem over the Stiefel manifold, where the objective function is a pointwise maximization of finite number of individual functions. It is also easy to see that each individual function can be reparametrized using $U$, and the Riemannian gradient descent algorithm in Section~\ref{sec:riemann} can be adapted to solve for the optimal solution. The details on the algorithm are omitted.}

%\newpage
\section{Information on Datasets}
\label{appendix:datasets}

\review{We summarize here the number of observations, dimensions, and the sensitive attribute of the data sets. For further information about the data sets and pre-processing steps, please refer to~\citet{ref:samadi2018price} for Default Credit and Labeled Faces in the Wild (LFW) data sets, and \citet{ref:olfat2019convex} for others. For each data set, we further remove columns with too small standard deviation ($\le 1e^{-5}$) as they do not significantly affect the results, and ones with too large standard deviation ($\ge 1000$) which we consider as unreliable features.} 

\begin{table}[!h]
    \centering
    \caption{Number of observations $N$, dimensions $d$, and sensitive attribute $A$ of datasets used in this paper. (y - yes, n - no)}
    \small
    \begin{tabular}{|c|c|c|c|c|c|c|c|}
        \hline
            & Default Credit \ & Biodeg \ & E. Coli \ & Energy \ & German Credit \   \\ \hline
        $N$ & $30000$ & $1055$ & $333$ & $768$ & $1000$   \\
        $d$ & $22$ & $40$ & $7$ & $8$ & $48$ \\ 
        \review{$A$} & \review{\shortstack{Education \\ (high/low)}} & \review{\shortstack{Ready Biodegradable \\ (y/n)}} & \review{\shortstack{isCytoplasm \\ (y/n)}} & \review{\shortstack{Orientation$<4$\\ (y/n)}} & \review{\shortstack{$A13 \ge 200 \text{DM}$ \\ (y/n)}} \\   
        \hline
            & Image \ & Letter \ & Magic \ & Parkinsons \ & SkillCraft \ \\ \hline
        $N$ & $660$ & $20000$ & $19020$ & $5875$ & $3337$  \\
        $d$ & $18$ & $16$ & $10$ & $20$ & $17$  \\
        \review{$A$} & \review{class (path/grass)} & \review{Vowel (y/n)} & \review{classIsGamma (y/n)} & \review{Sex (male/female)} & \review{Age$>20$ (y/n)} \\
        \hline
            & Statlog \ & Steel \ & Taiwan Credit \ & Wine Quality \  & \review{LFW} \ \\ \hline
        $N$ & $3071$ & $1941$ & $29623$ & $6497$ & \review{$4000$} \\
        $d$ & $36$ & $24$ & $22$ & $11$ & \review{$576$}  \\
        \review{$A$} & \review{\shortstack{RedSoil \\ (vsgrey/dampgrey)}} & \review{FaultOther (y/n)} & \review{Sex (male/female)} & \review{isWhite (y/n)} & \review{Sex (male/female)} \\
        \hline
    
    \end{tabular}

    \label{tab:datasets}
\end{table}

\section{Additional Results}
\label{appendix:additional_results}

\subsection{Detail Performances}

Table~\ref{tab:performance_insample} shows the performances of four examined methods with two criteria $\text{ABDiff.}$~and $\text{ARE}$. It is clear that our method achieves the best results over all $14$ datasets w.r.t.~$\text{ABDiff.}$, and $7$ datasets on $\text{ARE.}$, which is equal to the number of datasets \texttt{FairPCA} out-perform others.

Table~\ref{tab:performance_outsample_full} complements Table~\ref{tab:performance_outsample} from the main text, from which we can see that two versions of \texttt{CFPCA} out-perform others over all datasets w.r.t.~$\triangle \mathcal F_{Lin}$, which is the criteria they optimize for.

\begin{table}[!h]
    \centering
    \caption{In-sample performance over two criteria}
    \small
    \begin{tabular}{c|cc|cc|cc|cc}
        \hline 
        & \multicolumn{2}{|c|}{RFPCA} & \multicolumn{2}{|c|}{FairPCA} & \multicolumn{2}{|c|}{CFPCA-Mean Con.} & \multicolumn{2}{|c}{CFPCA - Both Con.} \\ 
        Dataset & ABDiff. & ARE. & ABDiff. & ARE. & ABDiff. & ARE. & ABDiff. & ARE. \\ \hline
        Default Credit & $\bf 0.9457$ & $9.9072$ & $1.5821$ & $\bf 9.9049$ & $0.9949$ & $10.5164$ & $3.2827$ & $21.4523$ \\ 

Biodeg & $\bf 9.4093$ & $\bf 23.1555$ & $14.2587$ & $23.8227$ & $15.5545$ & $26.6540$ & $24.8706$ & $39.8737$ \\ 

E. Coli & $\bf 0.5678$ & $\bf 1.4804$ & $0.9191$ & $2.0840$ & $0.9539$ & $2.8360$ & $4.5225$ & $5.2155$ \\ 

Energy & $\bf 0.0094$ & $0.2295$ & $0.0153$ & $\bf 0.2273$ & $0.2658$ & $2.7893$ & $0.2136$ & $7.8768$ \\ 

German Credit & $\bf 1.6265$ & $\bf 40.1512$ & $2.9824$ & $40.3393$ & $2.6109$ & $40.1860$ & $2.8741$ & $47.1006$ \\ 

Image & $\bf 0.1320$ & $\bf 5.0924$ & $0.7941$ & $9.0437$ & $0.6910$ & $13.4491$ & $3.0118$ & $18.0000$ \\ 

Letter & $\bf 0.1121$ & $\bf 7.4088$ & $1.2560$ & $7.4375$ & $0.4572$ & $8.7764$ & $0.5301$ & $15.2234$ \\ 

Magic & $\bf 1.7405$ & $3.8766$ & $2.8679$ & $\bf 3.3500$ & $5.5405$ & $4.1938$ & $8.7963$ & $8.9695$ \\ 

Parkinsons & $\bf 0.1238$ & $5.0471$ & $0.6702$ & $\bf 4.8760$ & $3.9470$ & $5.9379$ & $17.8122$ & $19.9788$ \\ 

SkillCraft & $\bf 0.4231$ & $8.1569$ & $0.5576$ & $\bf 8.1096$ & $0.7156$ & $9.7755$ & $0.9334$ & $15.8245$ \\ 

Statlog & $\bf 0.1972$ & $\bf 3.0588$ & $0.3315$ & $7.9980$ & $0.3857$ & $10.9358$ & $13.0725$ & $35.9214$ \\ 

Steel & $\bf 0.6943$ & $11.0396$ & $1.8015$ & $\bf 10.7653$ & $2.8933$ & $14.5680$ & $1.9322$ & $23.9906$ \\ 

Taiwan Credit & $\bf 1.1516$ & $10.5136$ & $1.3362$ & $\bf 10.4478$ & $1.3158$ & $12.5867$ & $2.2720$ & $21.4365$ \\ 

Wine Quality & $\bf 0.1125$ & $\bf 4.1491$ & $0.1705$ & $5.8999$ & $1.1359$ & $5.9117$ & $2.5852$ & $9.8959$ \\ 

\review{LFW} & \review{$\bf 0.4147$} & \review{$7.5137$} & \review{$0.5300$} & \review{$\bf 7.5127$} & \multicolumn{4}{c}{\review{fail to converge}} \\

 \hline
           
    \end{tabular}
    \label{tab:performance_insample}
\end{table}

\begin{table}[!h]
    \centering
    \caption{Out-of-sample performance measured using the $\triangle \mathcal F_{Lin}$ criterion.}
    \begin{tabular}{c|c|c|c|c}
    \hline
    & RFPCA & FairPCA & CFPCA-Mean Con. & CFPCA - Both Con. \\
     \hline
     
Default Credit & $0.1596$ & $0.2236$ & $0.0574$ & $\bf 0.0413$ \\ 

Biodeg & $0.4892$ & $0.4759$ & $0.2014$ & $\bf 0.1371$ \\ 

E. Coli & $0.8556$ & $0.7444$ & $0.4455$ & $\bf 0.2532$ \\ 

Energy & $0.0580$ & $0.0554$ & $\bf 0.0502$ & $0.0736$ \\ 

German Credit & $0.1997$ & $0.1737$ & $0.1408$ & $\bf 0.1093$ \\ 

Image & $0.9996$ & $0.9498$ & $\bf 0.1874$ & $0.2013$ \\ 

Letter & $0.0954$ & $0.0942$ & $0.0556$ & $\bf 0.0455$ \\ 

Magic & $0.2195$ & $0.2531$ & $0.1561$ & $\bf 0.0882$ \\ 

Parkinson's & $0.1459$ & $0.1061$ & $0.1805$ & $\bf 0.0480$ \\ 

SkillCraft & $0.1126$ & $0.1141$ & $\bf 0.0721$ & $0.0742$ \\ 

Statlog & $0.9804$ & $0.6309$ & $0.1359$ & $\bf 0.0669$ \\ 

Steel & $0.2288$ & $0.2240$ & $0.1418$ & $\bf 0.0875$ \\ 

Taiwan Credit & $0.0604$ & $0.0535$ & $0.0391$ & $\bf 0.0370$ \\ 

Wine Quality & $0.9699$ & $0.4639$ & $0.2192$ & $\bf 0.0817$ \\ 
 \hline
           
    \end{tabular}
    \label{tab:performance_outsample_full}
\end{table}

\review{
\textbf{Adjustment for the LFW dataset.} To demonstrate the efficacy of our method on high-dimensional data sets, we also do experiments on a subset of $2000$ faces for each of male and female group ($4000$ in total) from LFW dataset,\footnote{https://github.com/samirasamadi/Fair-PCA} all images are rescaled to resolution $24\times 24$ (dimensions $d = 576$). The experiment follows the same procedure in Section~\ref{sec:experiment}, with reducing the number of iterations to $500$ for both \texttt{RFPCA} and \texttt{FairPCA} and $2$-fold cross validation, the results are averaged over $10$ train-test simulations. Due to the high dimension of the input, the implementation of~\citet{ref:olfat2019convex} fails to return any result.}

\newpage
\subsection{Visualization}
\label{appendix:visualization}

\subsubsection{Effects of the Ambiguity Set Radius}

\review{We examine the change of the model's performance with respect to the change of the radius of the ambiguity sets. To generate the toy data (also used for Figure~\ref{fig:compare}), we use two $2$-dimensional Gaussian distributions to represent two groups of a sensitive attribute, $A=0$ and $A=1$, or groups $0$ and $1$ for simplicity. The two distributions both have the mean at $(0,0)$ and covariance matrices for group $0$ and $1$ are \[\begin{pmatrix}
  4.0 & 0\\
  0 & 0.2
\end{pmatrix} \quad \text{and} \quad \begin{pmatrix}
  0.2 & 0.4\\
  0.4 & 3.0
\end{pmatrix},
\] respectively. For the test set, the number of samples is $8000$ for group $0$ and $4000$ for group $1$, while for the training set, we have $200$ for group $0$ and $100$ for group $1$. We average the results over $100$ simulations, for each simulation, the test data is fixed, the training data is randomly generated with the number of samples mentioned above. The projections are learned on training data and measured on test data by the summation of $\text{ARE.}$~and $\text{ABDiff.}$ We fixed $\lambda=0.1$, which is not too small for achieving fair projections, and not too large to clearly observe the effects of $\eps$, and we also fixed $\eps_0$ for better visualization. Note that we still compute $\eps_1 = \alpha / \sqrt{N_1}$ in which, $\alpha$ is tested with $100$ values evenly spaced in $[0, 10]$. 

The experiment results are visualized in Figure~\ref{fig:change_alpha}. The result suggests that increasing the ambiguity set radius can improve the overall model's performance. This justifies the benefit of adding distributional robustness to the fairness-aware PCA model. After a saturation point, a too large radius can lessen the role of empirical data, and the model prioritizes a more extreme distribution that is far from the target distribution, which causes the reduction in the model's performance on target data. }

\begin{figure}[!h]
    \centering
        \includegraphics[width=0.5\textwidth]{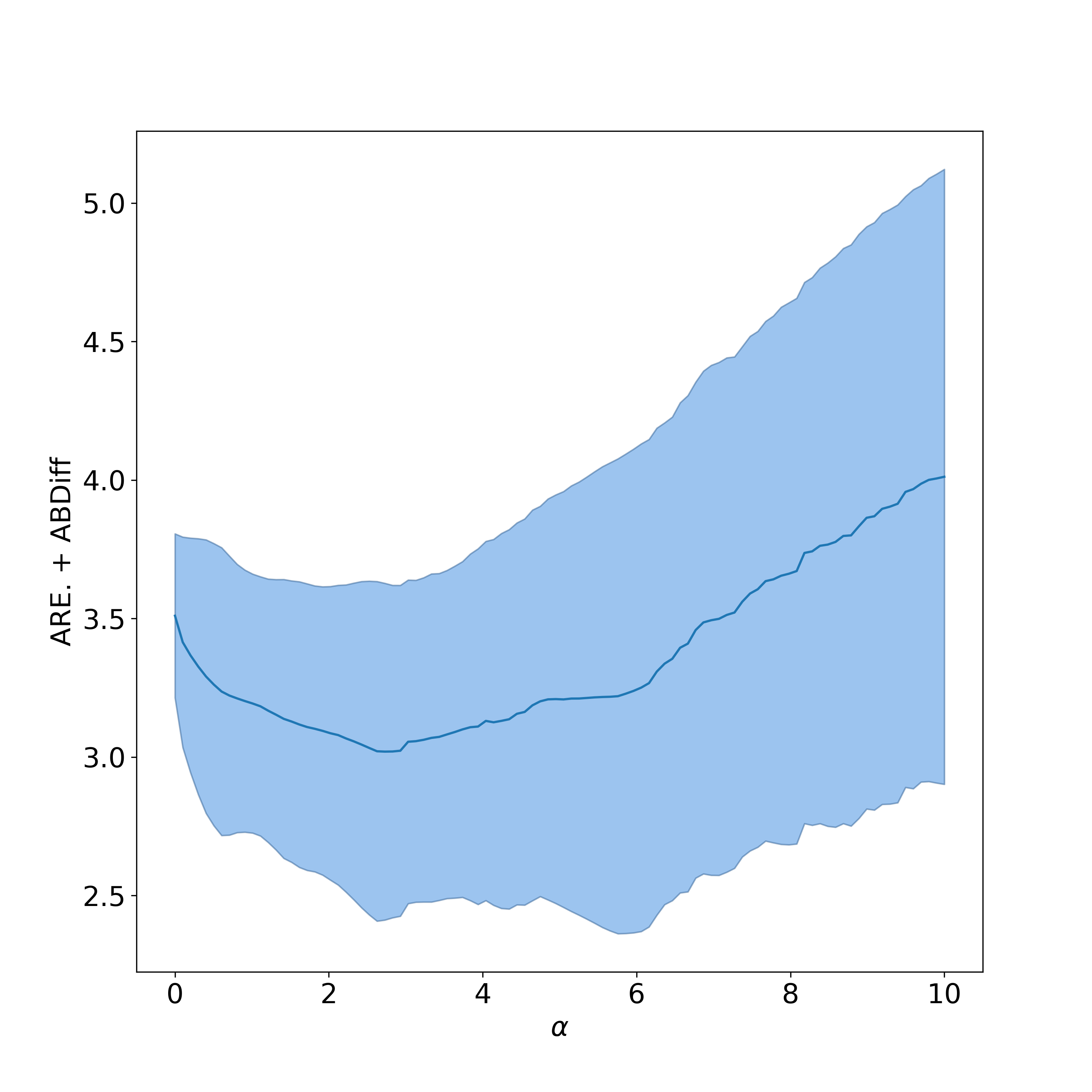}
        \caption{Performance changes w.r.t.~the ambiguity set's radius. The solid line is the average over $100$ simulations, and the shade represent the 1-standard deviation range.}
        \label{fig:change_alpha}
\end{figure}
\newpage

\subsubsection{Pareto Curves}

Figures~\ref{fig:pareto-Biodeg} and~\ref{fig:pareto-Credit} plot the Pareto frontier for two datasets (Biodeg and German Credit) with $3$ principal components. One can observe that \texttt{RFPCA} produces points that dominate other methods based on the trade-off between ARE.~and ABDiff.

\begin{figure}[!h]
\centering
\begin{minipage}{0.5\textwidth}
    \centering
    \includegraphics[width=\textwidth]{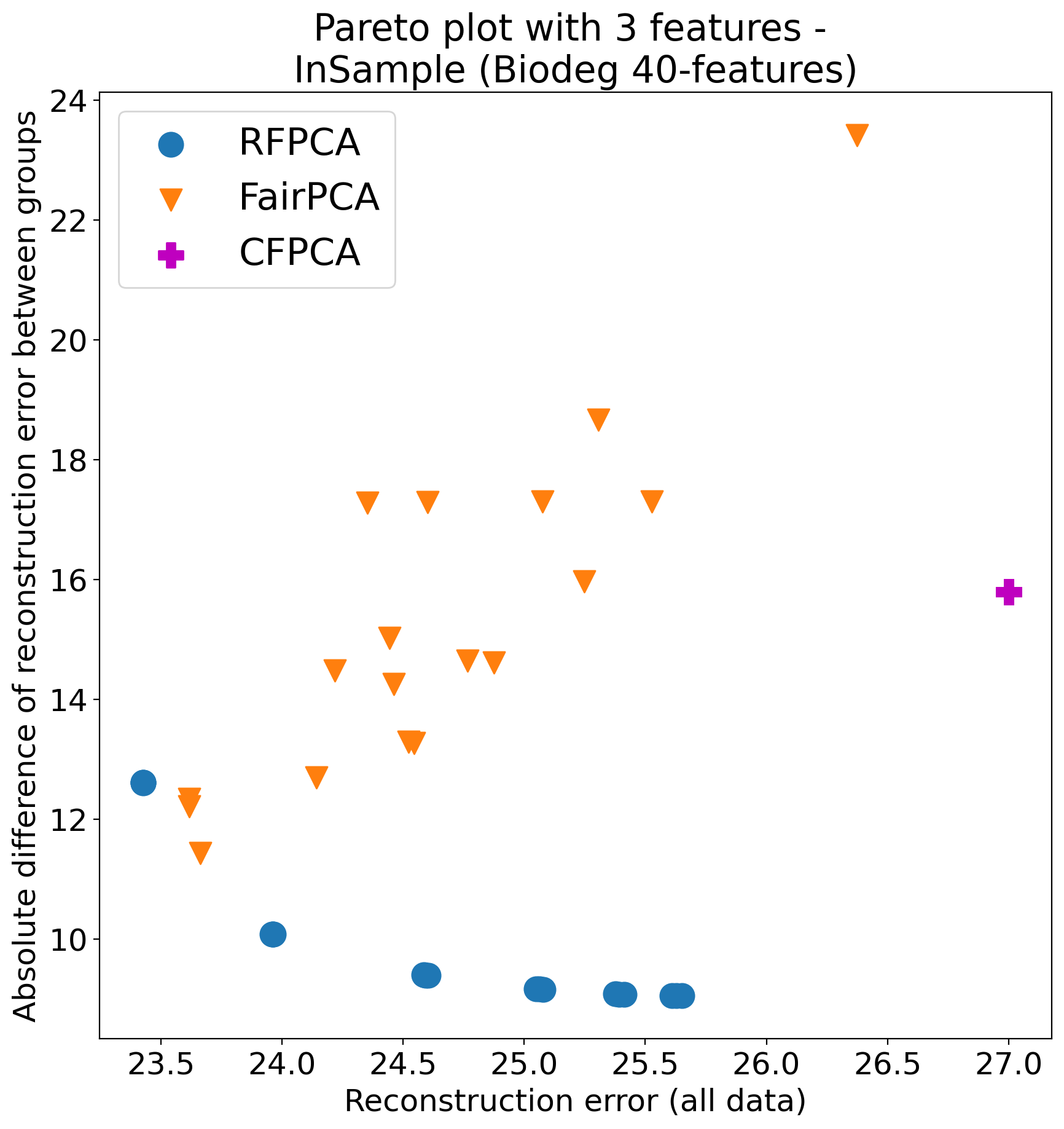}
    \caption{Pareto curves on Biodeg \\
            dataset (all data) with $3$ principal components} \label{fig:pareto-Biodeg}
\end{minipage}\hfill
\begin{minipage}{0.5\textwidth}
    \centering
    \includegraphics[width=\textwidth]{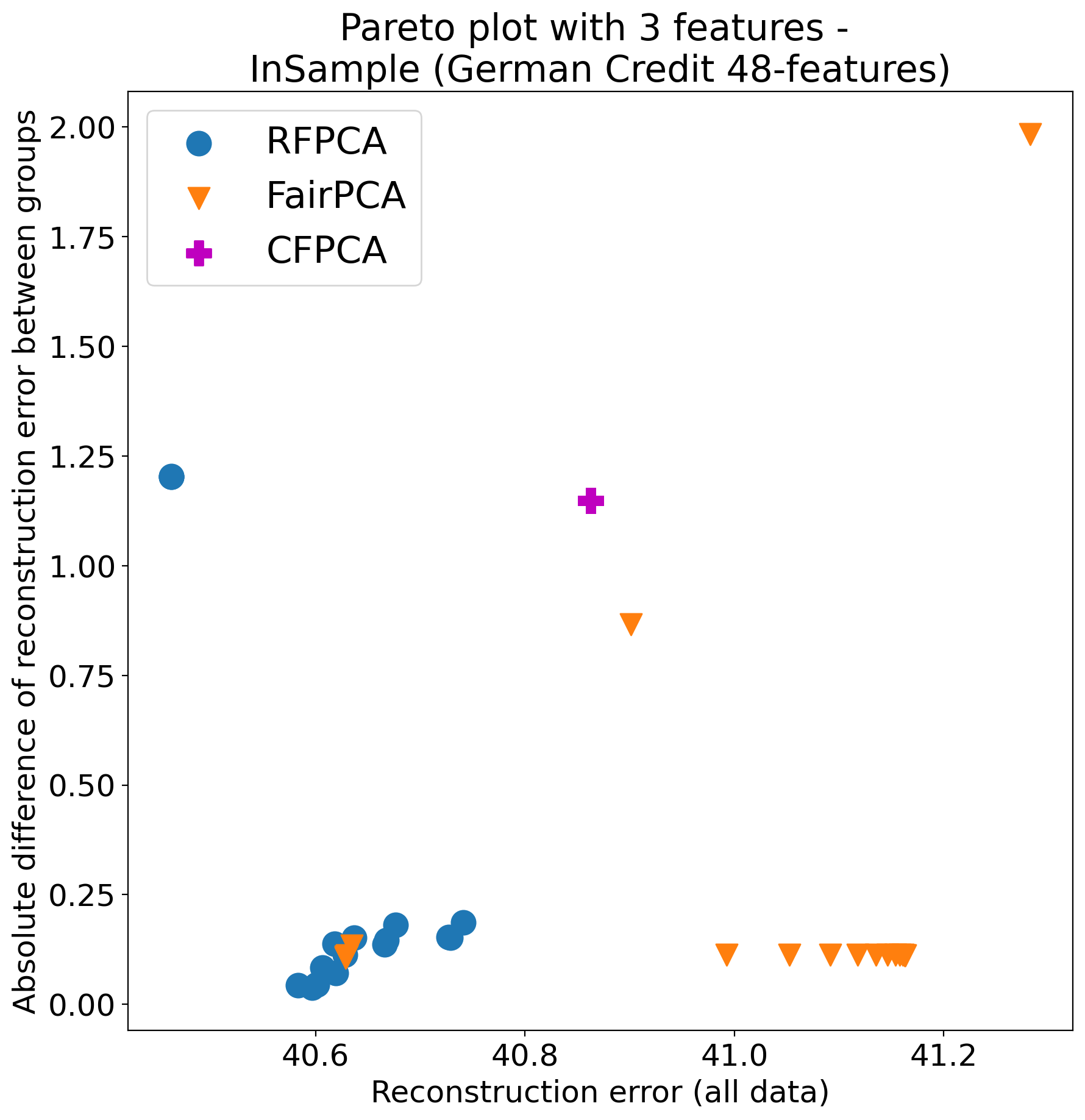}
    \caption{Pareto curves on German Credit \\
            dataset (all data) with $3$ principal components} \label{fig:pareto-Credit}
\end{minipage}
\end{figure}

\newpage
\subsubsection{Performance with different principal components}

We collect here the reconstruction errors for different numbers of principal components.

\begin{figure}[!h]
\centering
\begin{minipage}{0.5\textwidth}
    \centering
    \includegraphics[width=\textwidth]{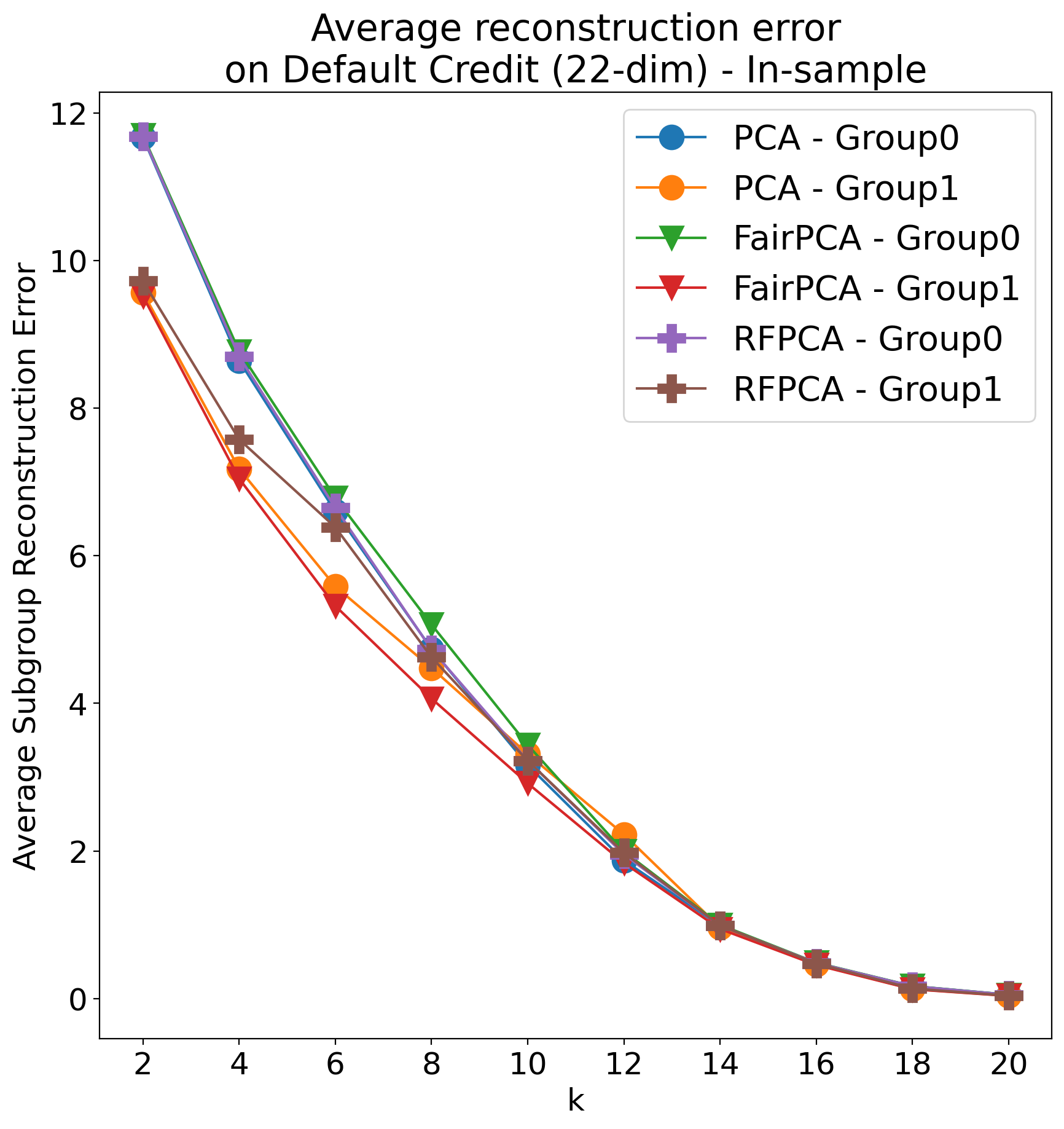}
    \caption{Subgroup average error \\
            with different $k$ on Default Credit dataset}
\end{minipage}\hfill
\begin{minipage}{0.5\textwidth}
    \centering
    \includegraphics[width=\textwidth]{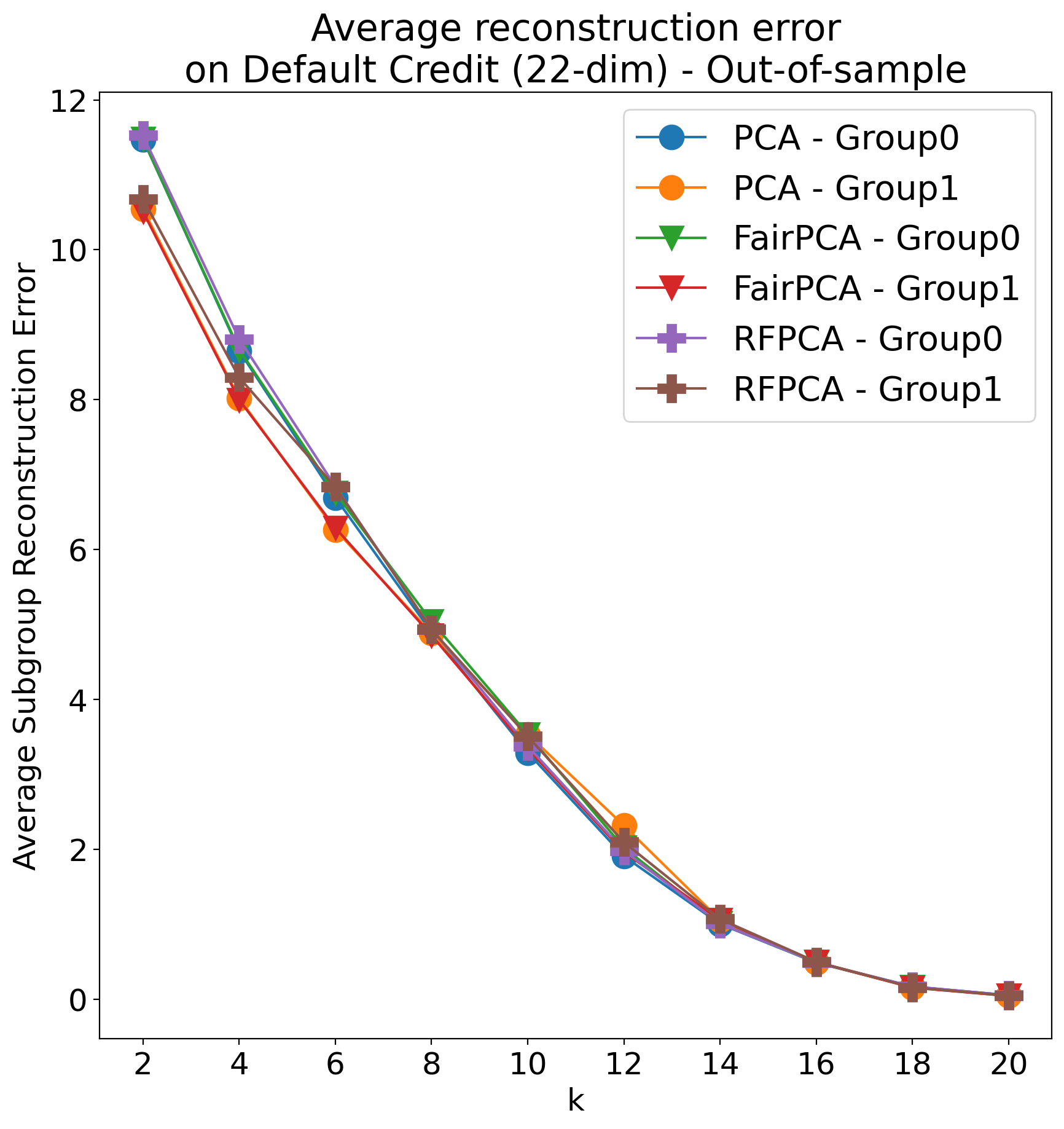}
    \caption{Subgroup average error \\
            with different $k$ on Default Credit dataset}
\end{minipage}
\end{figure}

\begin{figure}[!h]
\centering
\begin{minipage}{0.5\textwidth}
    \centering
    \includegraphics[width=\textwidth]{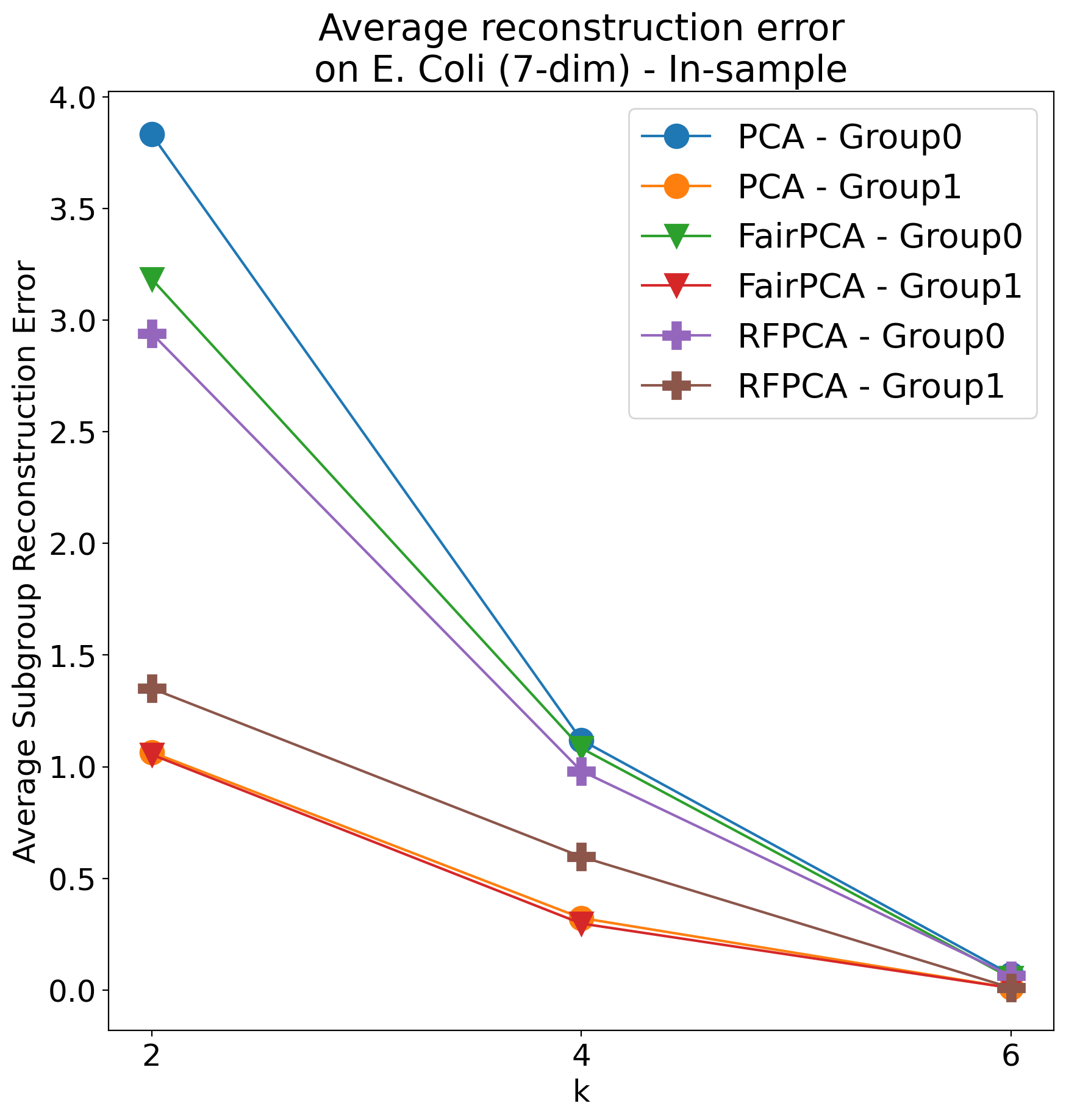}
    \centering
    \caption{Subgroup average error \\
            with different $k$ on E. Coli dataset}
\end{minipage}\hfill
\begin{minipage}{0.5\textwidth}
    \centering
    \includegraphics[width=\textwidth]{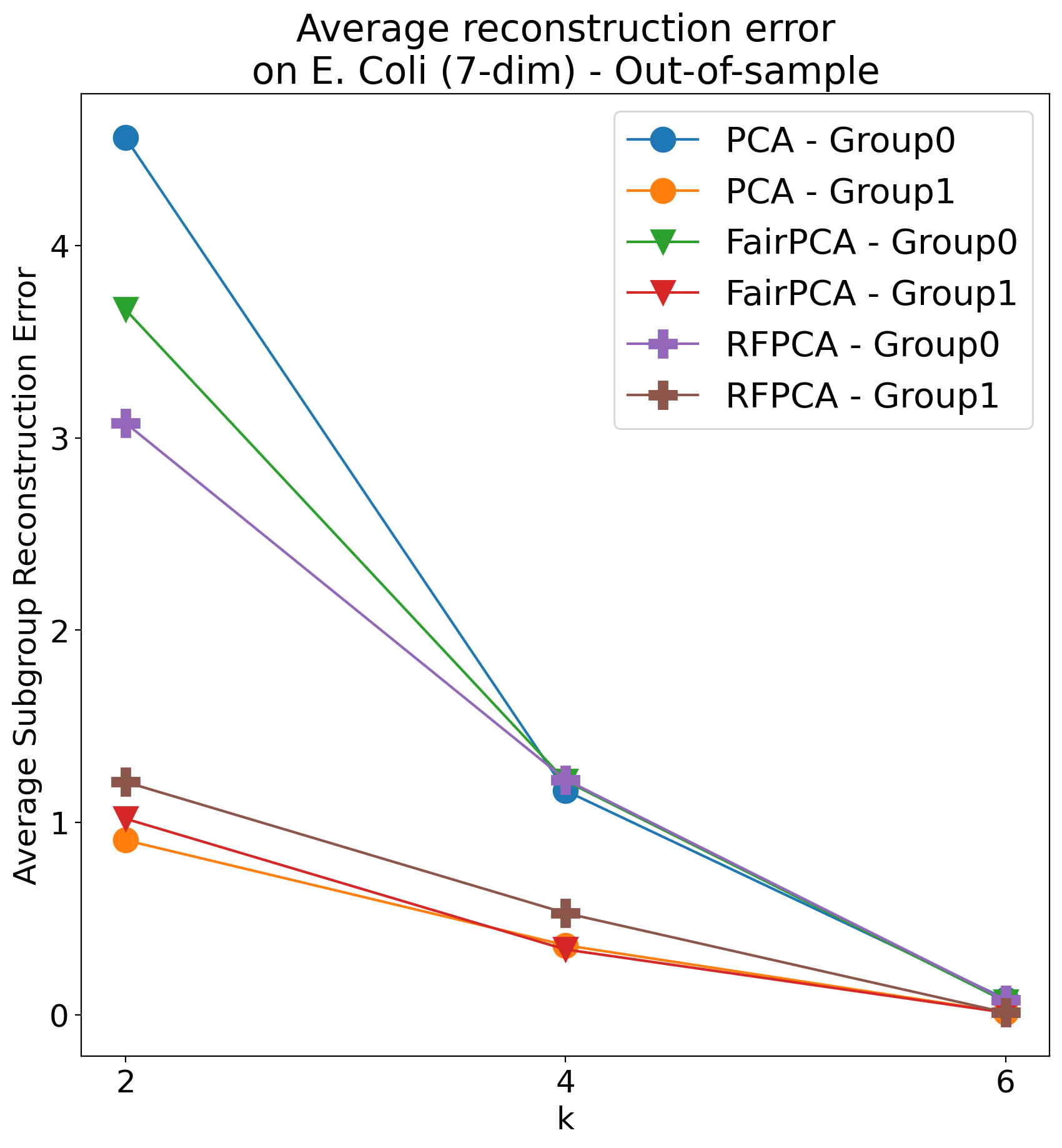}
    \centering
    \caption{Subgroup average error\\
            with different $k$ on E. Coli dataset}
\end{minipage}
\end{figure}

\begin{figure}[!h]
\centering
\begin{minipage}{0.5\textwidth}
    \centering
    \includegraphics[width=\textwidth]{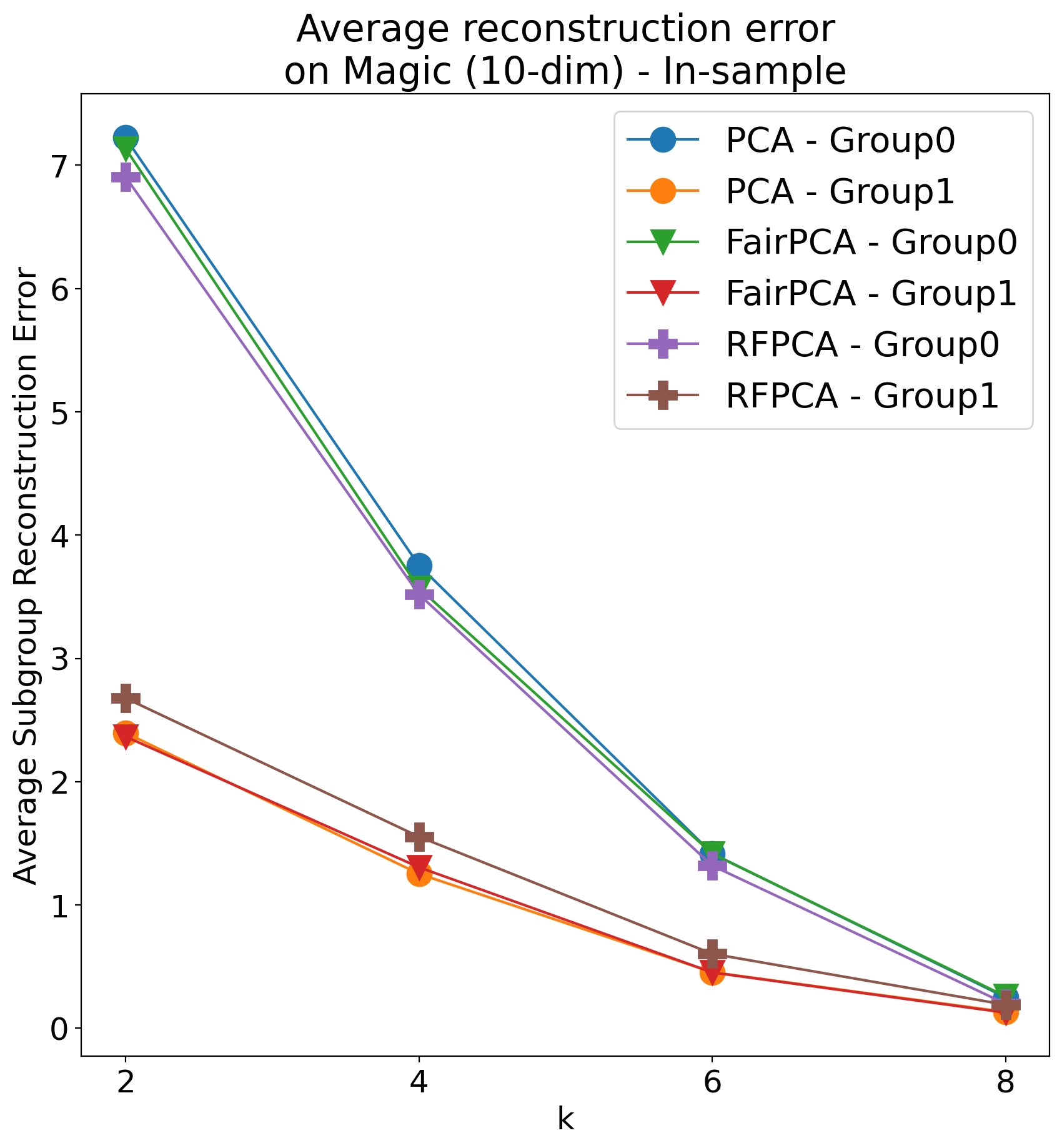}
    \caption{Subgroup average error\\
            with different $k$ on Magic dataset}
\end{minipage}\hfill
\begin{minipage}{0.5\textwidth}
    \centering
    \includegraphics[width=\textwidth]{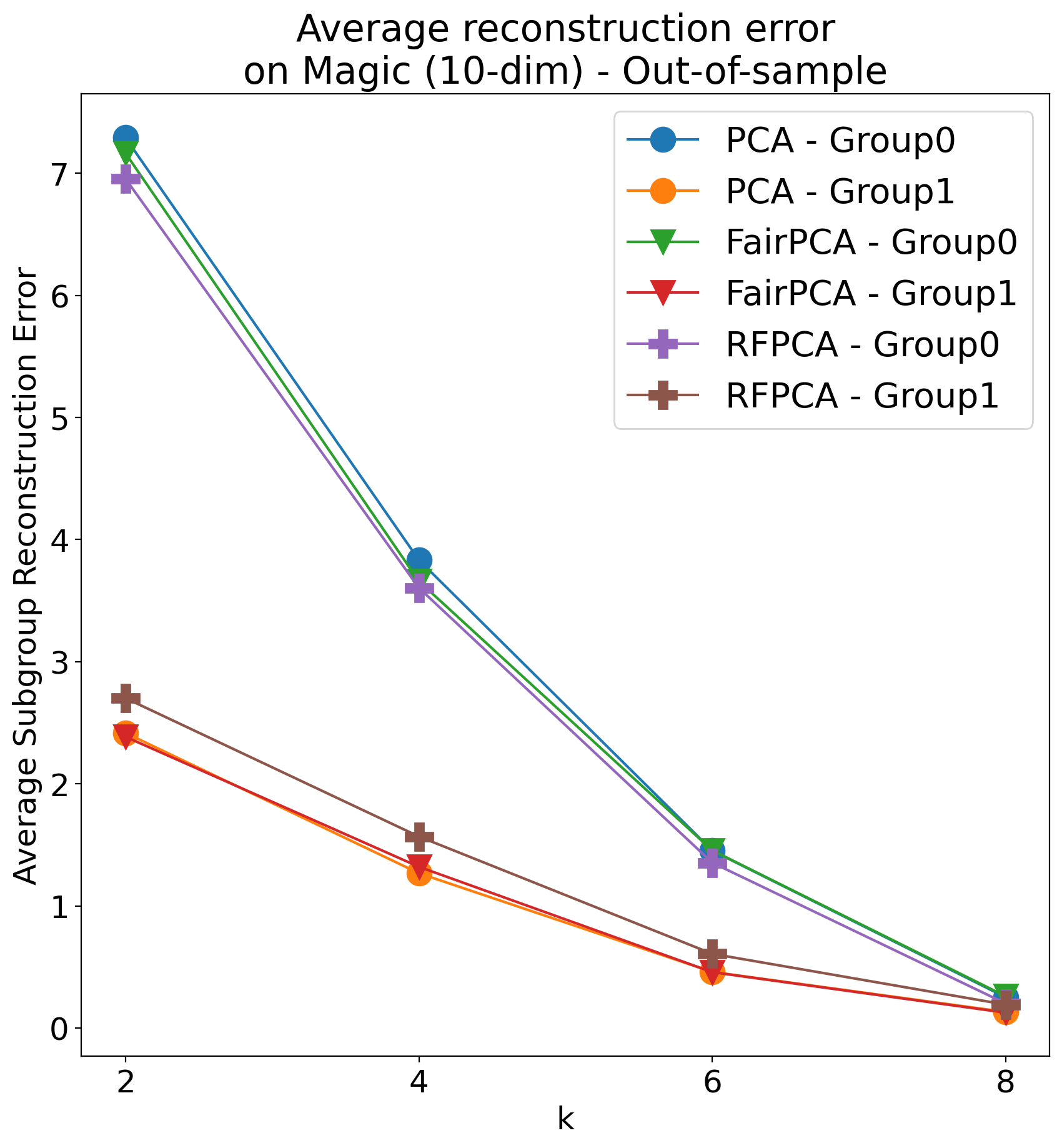}
    \caption{Subgroup average error\\
            with different $k$ on Magic dataset}
\end{minipage}
\end{figure}

\begin{figure}[!h]
\centering
\begin{minipage}{0.5\textwidth}
    \centering
    \includegraphics[width=\textwidth]{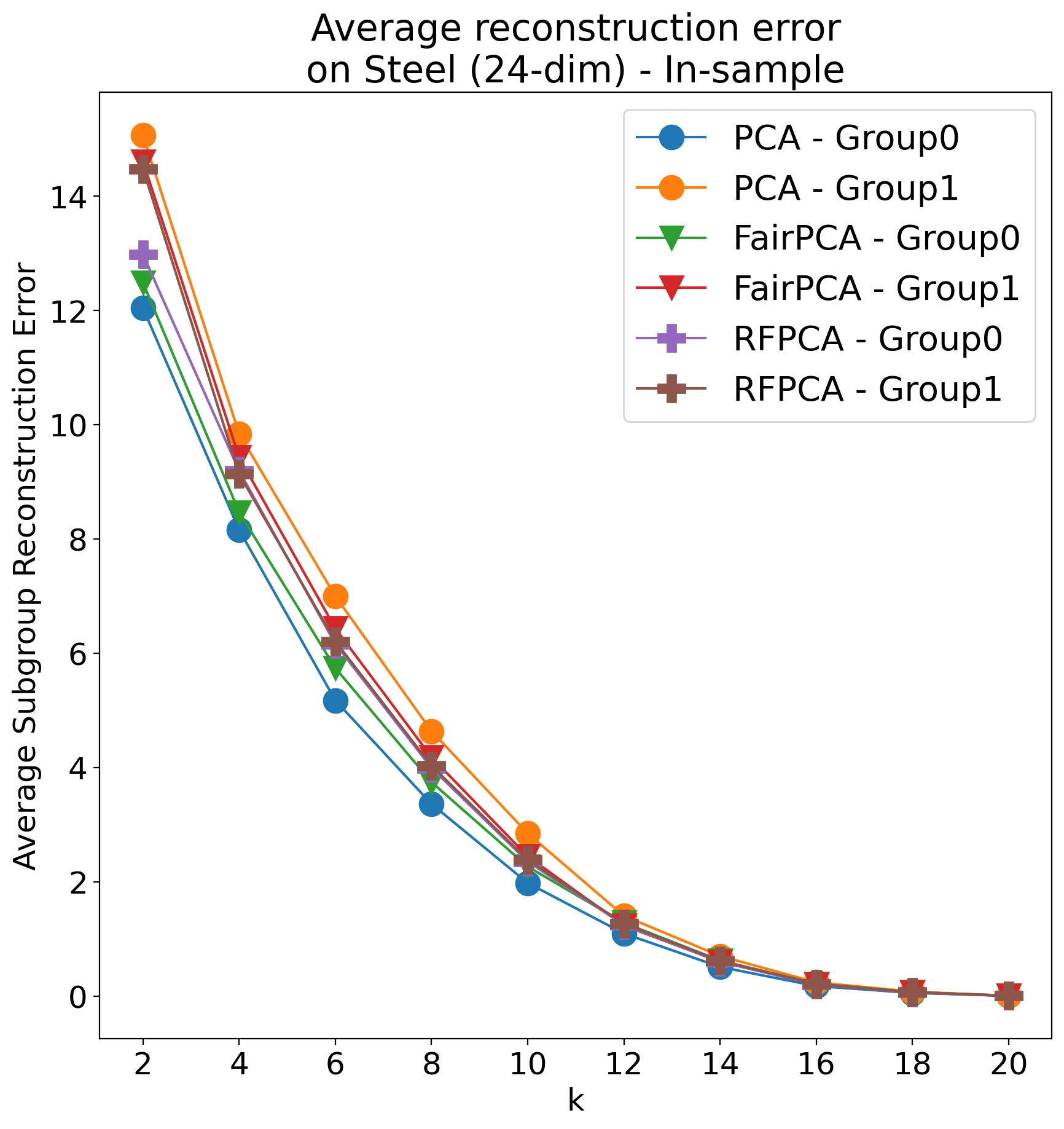}
    \caption{Subgroup average error\\
            with different $k$ on Steel dataset}
\end{minipage}\hfill
\begin{minipage}{0.5\textwidth}
    \centering
    \includegraphics[width=\textwidth]{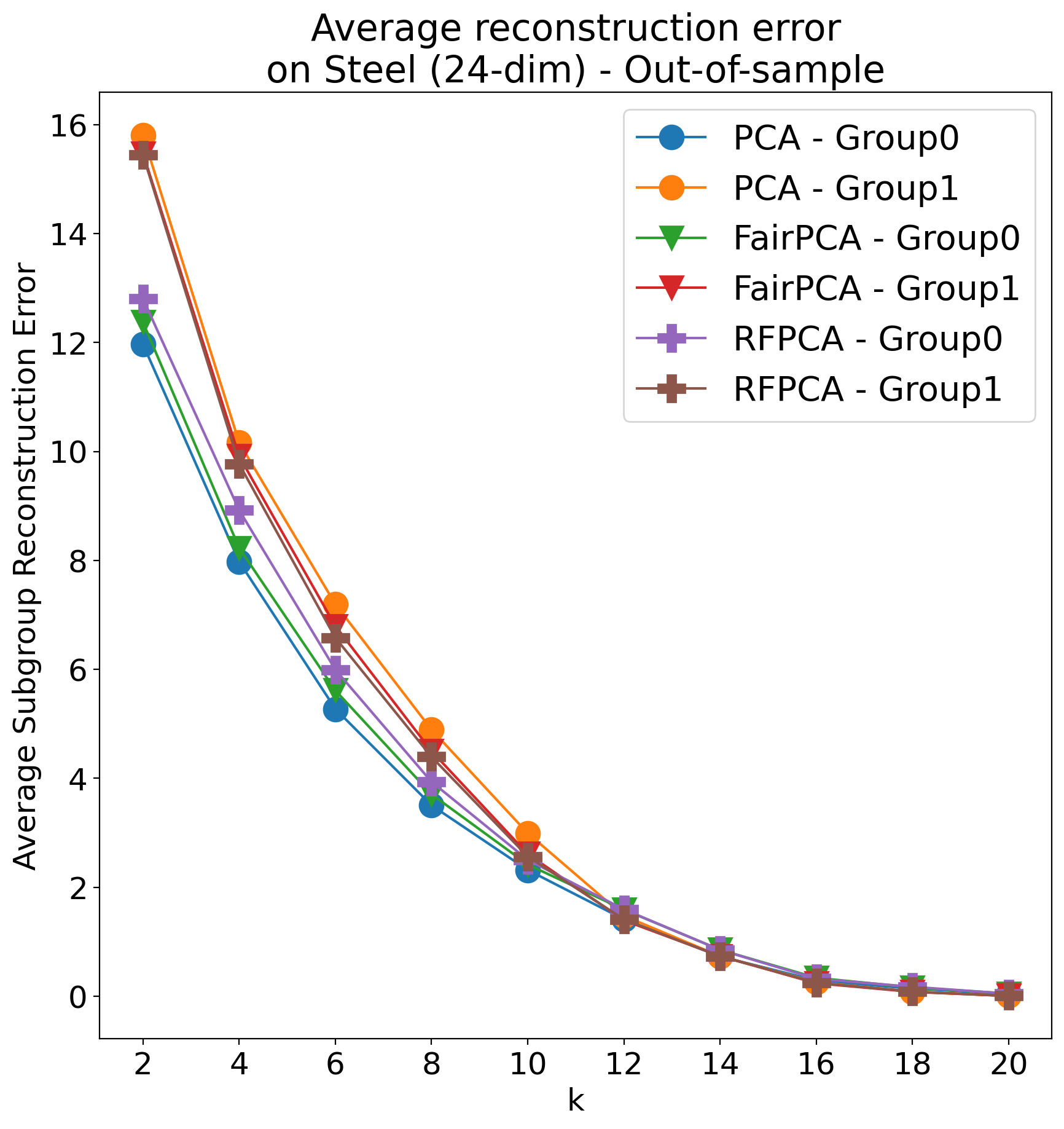}
    \caption{Subgroup average error\\
            with different $k$ on Steel dataset}
\end{minipage}
\end{figure}

\begin{figure}[!h]
\centering
\begin{minipage}{0.5\textwidth}
    \centering
    \includegraphics[width=\textwidth]{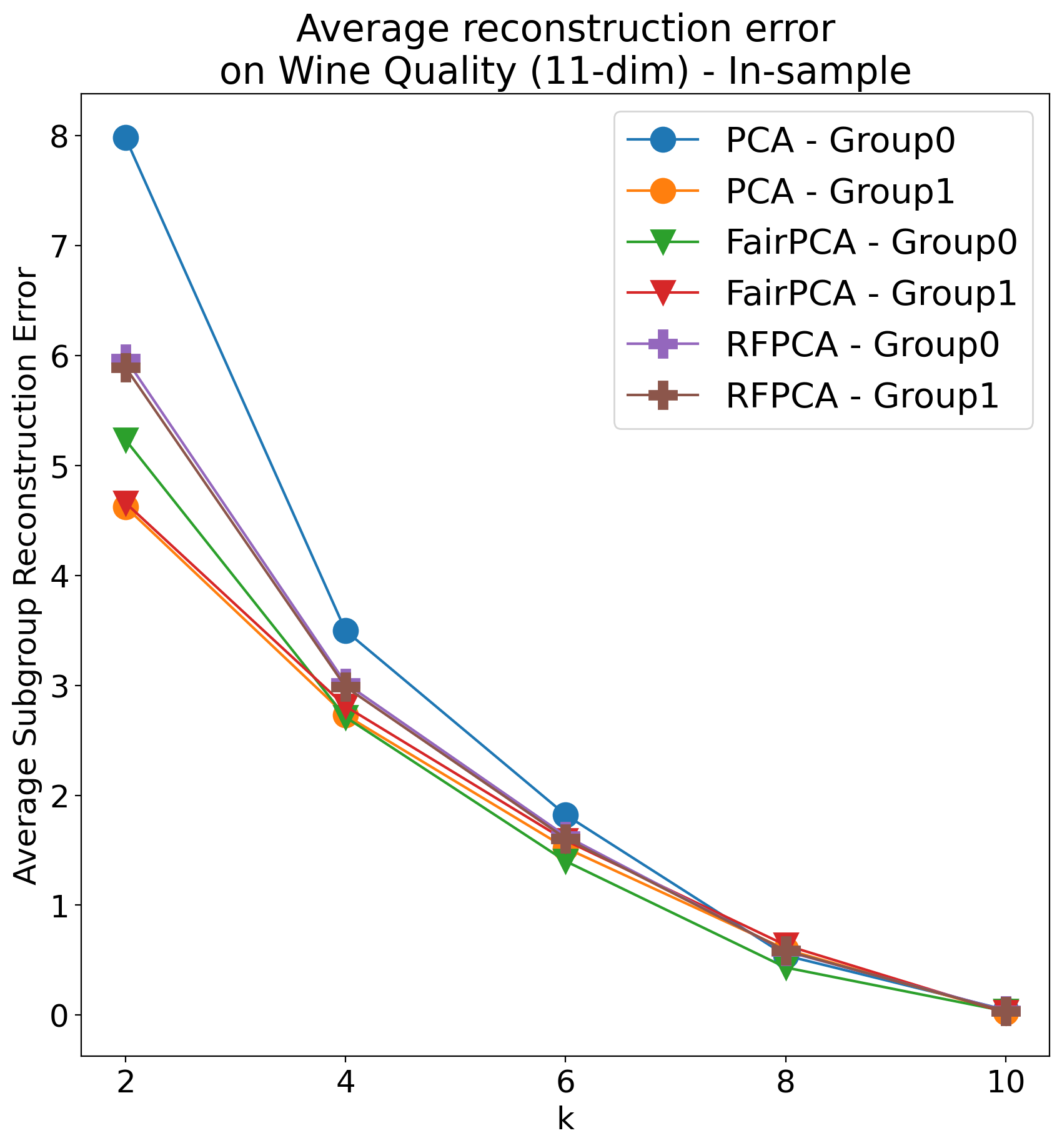}
    \caption{Subgroup average error\\
            with different $k$ on Wine Quality dataset}
\end{minipage}\hfill
\begin{minipage}{0.5\textwidth}
    \centering
    \includegraphics[width=\textwidth]{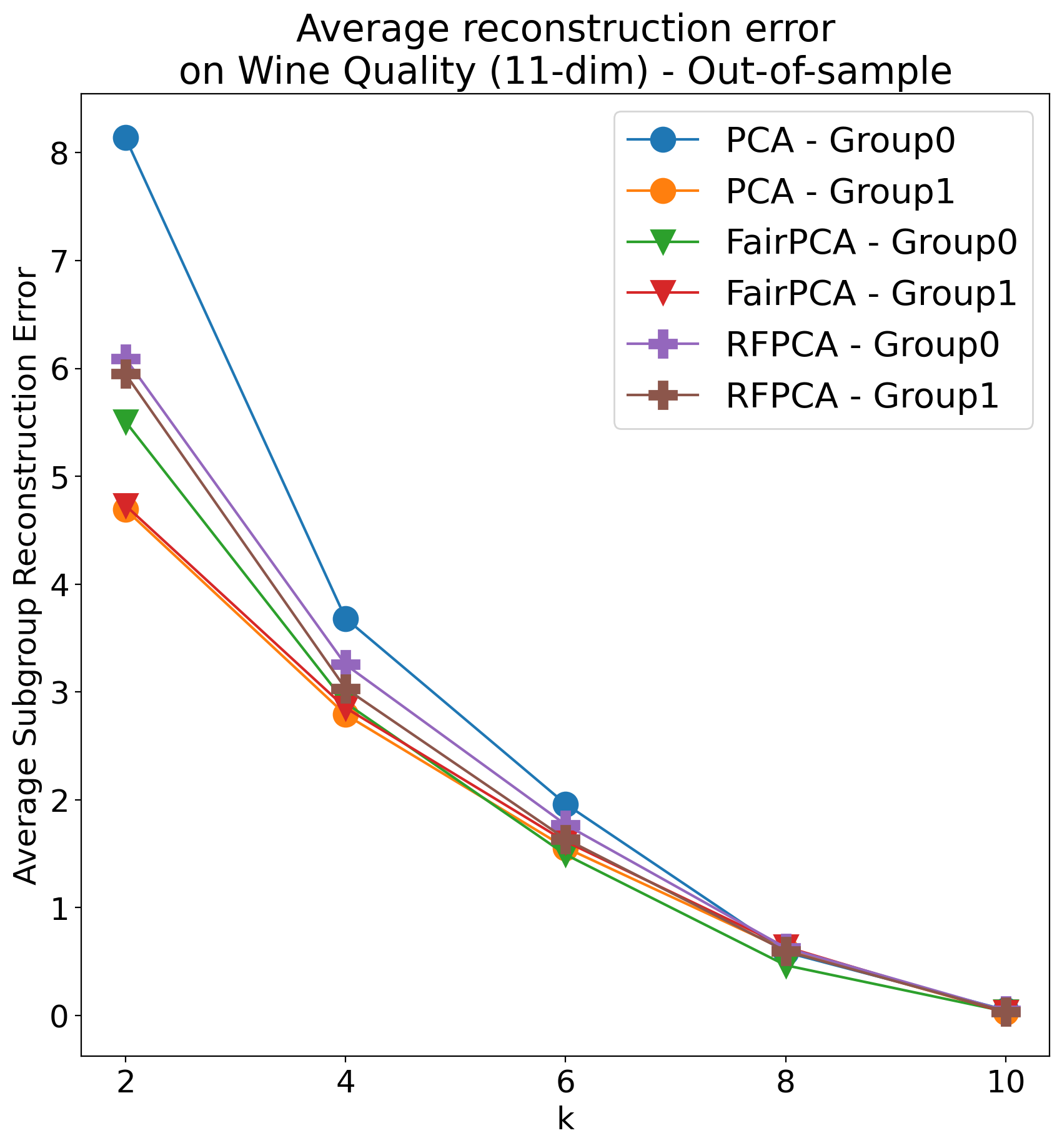}
    \caption{Subgroup average error\\
            with different $k$ on Wine Quality dataset}
\end{minipage}
\end{figure}

\end{appendix}
\end{document}